\documentclass{article}

\usepackage{microtype}
\usepackage{graphicx}
\usepackage{subcaption}
\usepackage{booktabs} 

\usepackage{hyperref}

\usepackage[preprint]{icml2026}

\usepackage{amsmath}
\usepackage{amssymb}
\usepackage{mathtools}
\usepackage{amsthm}
\usepackage{thm-restate}

\usepackage[capitalize,noabbrev]{cleveref}

\usepackage{xspace}
\usepackage{textcase}
\usepackage[T1]{fontenc}
\DeclareMathOperator*{\argmin}{arg\,min}
\DeclarePairedDelimiter{\ceil}{\lceil}{\rceil}
\DeclarePairedDelimiter{\floor}{\lfloor}{\rfloor}
\newcommand{\CC}{C\texttt{++}}

\newcommand{\RLTS}{\ensuremath{\sqrt{\mathrm{\textnormal{\textsc{lts}}}}}\xspace}
\newcommand{\RLTSL}{\ensuremath{\sqrt{\mathrm{\textnormal{\textsc{lts}}}}{ \mathrm{\textnormal{\textsc{-L}}} }}\xspace}
\newcommand{\RLTSH}{\ensuremath{\sqrt{\mathrm{\textnormal{\textsc{lts}}}}{ \mathrm{\textnormal{\textsc{-H}}} }}\xspace}
\newcommand{\RLTSLH}{\ensuremath{\sqrt{\mathrm{\textnormal{\textsc{lts}}}}{ \mathrm{\textnormal{\textsc{-LH}}} }}\xspace}

\newcommand{\PHS}{\mbox{PHS*}\xspace}
\newcommand{\SGPS}{\mbox{SGPS}\xspace}
\newcommand{\SLTS}{\mbox{$\text{LTS}(\pi^\text{SG})$}\xspace}
\newcommand{\SPHS}{\mbox{$\text{PHS*}(\pi^\text{SG})$}\xspace}
\newcommand{\HIPSE}{\mbox{HIPS-$\varepsilon$}\xspace}
\newcommand{\ie}{\textit{i.e.}\xspace}

\theoremstyle{plain}
\newtheorem{theorem}{Theorem}[section]

\theoremstyle{definition}

\newtheorem{example}[theorem]{Example}
\theoremstyle{remark}

\usepackage[textsize=tiny]{todonotes}

\icmltitlerunning{Structure-Induced Information for Rerooting Levin Tree Search}

\begin{document}

\twocolumn[
  \icmltitle{Structure-Induced Information for Rerooting Levin Tree Search}



  \icmlsetsymbol{equal}{*}

  \begin{icmlauthorlist}
  \icmlauthor{Jake Tuero}{xxx,yyy}
  \icmlauthor{Michael Buro}{xxx}
  \icmlauthor{Laurent Orseau}{zzz}
  \icmlauthor{Levi H. S. Lelis}{xxx,yyy}
  \end{icmlauthorlist}
    
  \icmlaffiliation{xxx}{Department of Computing Science, University of Alberta, Edmonton, Canada.}
  \icmlaffiliation{yyy}{Alberta Machine Intelligence Institute (Amii), Edmonton, Canada.}
  \icmlaffiliation{zzz}{Google DeepMind, London, United Kingdom.}
    
  \icmlcorrespondingauthor{Jake Tuero}{tuero@ualberta.ca}

  \icmlkeywords{Machine Learning, ICML}

  \vskip 0.3in
]



\printAffiliationsAndNotice{}  


\begin{abstract}
Subgoal-based policy tree search, which uses a policy to guide the search, is effective for complex single-agent deterministic problems but often relies on explicit subgoal generation, which can incur substantial overhead, hindering scalability.
In this paper, we overcome these limitations by using a learned ``rerooter'' through the recently introduced \RLTS{} algorithm. 
A \emph{rerooter} implicitly decomposes the problem into soft subtasks. 
While previous work focused on the formal guarantees for given or handcrafted rerooters, in this work, we propose three rerooter designs:
(i) a clustering-based rerooter that exploits global state-space structure, (ii) a heuristic-based rerooter that leverages learned cost-to-go estimates, and (iii) a hybrid that combines both signals. 
Our framework avoids explicitly reconstructing and reasoning over generated subgoals, thereby enabling scalable allocation of search effort with significantly lower computational overhead. 
Empirically, our rerooting-based methods scale to complex environments where subgoal-based policy tree search fails, and achieve state-of-the-art online training efficiency on the domains tested.
\end{abstract}


\section{Introduction}
\label{sec:intro}
Complex discrete planning problems remain difficult to solve at scale, driving increasing interest in learning-guided search methods.
Levin Tree Search (LTS) \cite{orseau2018single}, a tree search algorithm that uses a learned policy to guide the search (a probability distribution over actions), has shown success in addressing these types of problems.
A key property of LTS is that it provides an upper bound on the number of search steps required before finding a solution, which depends on the \emph{quality} of the policy.
This, in turn, enables policies to be learned with the explicit objective of minimizing search effort.
Policy Guided Heuristic Search (\PHS{}) \cite{orseau2021policy} extends LTS by combining a learned policy with a learned heuristic function.
\citet{orseau2021policy} provide a similar bound for \PHS{}, and also show that a policy can be learned while minimizing this bound.

Although LTS and \PHS{} are theoretically well-founded, they rely primarily on the learned policy and heuristic functions and, without additional structural guidance, can struggle to solve complex problems.
A common approach when scaling to complex problem domains, inspired by how humans plan \cite{botvinick2009hierarchically, donnarumma2016problem, correa2023humans}, is to decompose the problem into easier subtasks and subgoals.
Subgoals help address this limitation by structuring the search and extending its reach beyond what the initial policy can support.
Building on this insight, prior work has introduced subgoal-based policy tree search methods, including \HIPSE{} \cite{kujanpaa2024hybrid} and Subgoal-Guided Policy Heuristic Search (\SGPS{}) \cite{tuero2025subgoalguided}, which generate intermediate target states and condition low-level policies on these generated subgoals to guide the search.
By explicitly reasoning over such subgoals, these approaches can improve early-stage exploration and learning efficiency.
However, they also introduce additional modeling complexity and computational overhead, as search performance becomes tightly coupled to the quality of subgoal reconstruction and the policies conditioned on them.
As we will show, this issue becomes increasingly pronounced as the domain complexity increases.

\RLTS{} \cite{orseau2024exponential}, read as ``root-LTS'', is a policy tree search algorithm that implicitly starts an LTS search at each node in the search tree.
The overall search effort is split between each of these searches, and the proportion of time allocated to each is given through a \emph{rerooter}.
This mechanism implicitly decomposes the search into subtasks, foregoing the complexity of the subgoal modeling of \HIPSE{} and \SGPS{} that makes those methods costly when scaling to complex domains.
\citet{orseau2024exponential} showed that the bound on the number of node expansions before \RLTS{} finds a solution can be exponentially better than that of LTS.

In this work, we revisit rerooting as a general mechanism for exploiting structure in the underlying state space in policy tree search and study how to derive rerooting weights in practice to guide search effectively.
Within this framework, we present three instantiations.
The first two capture complementary structural information: a global approach that induces structure via state-space clusters identified using Leiden clustering \cite{traag2019louvain}, and a lightweight local approach that derives structure from learned heuristic cost information.
We then show how an additive rerooter can take advantage of the strengths each rerooter provides, and instantiate this as a hybrid rerooter of the previous two.
In contrast to the subgoal baseline methods, which depend on computationally expensive subgoal generation using high-capacity models \cite{tuero2025subgoalguided}, our approach does not require learning or invoking separate subgoal networks. 
Instead, we instantiate rerooters from structure already present in the search tree, with the clustering rerooter running on demand during search.
Empirical results show that our rerooters substantially improve online training sample efficiency over non-rerooting baselines. 
These results establish rerooting as a scalable approach for exploiting structure in search, while avoiding explicit subgoal generation of previous work.

Our contributions can be summarized as follows.
We provide automated methods for learning rerooters from the structure of the search tree and show that even lightweight structural signals can achieve strong performance.
Finally, we provide a novel theoretical guarantee on the number of node expansions until the first solution node is found by \RLTS{} using additive rerooters, which highlights how \RLTS{} can take advantage of the \emph{synergies} of multiple rerooters.
The experiments conducted show that our method can scale to complex environments during online training and achieve state-of-the-art training efficiency for methods that use the bootstrap method, whereas previous methods that rely on subgoal reconstruction fall short.


\section{Preliminaries}
\label{sec:prelim}
Policy tree search algorithms solve single-agent deterministic problems by incrementally constructing a search tree. 
These problems are represented as a tuple $(\mathcal{S}, \mathcal{A}, T, s_1, \mathcal{S}_g, \ell)$, where $\mathcal{S}$ denotes the state space, $\mathcal{A}$ is a finite actions set, and $T \ : \ \mathcal{S} \times \mathcal{A} \to \mathcal{S}$ is a deterministic transition function.
The initial state is given by $s_1 \in \mathcal{S}$, and $\mathcal{S}_g$ is the set of goal states.
The search problem induces a directed graph $G = (\mathcal{S}, A)$, where an edge $(s,s') \in A$ exists whenever there is an action $a \in \mathcal{A}$ such that $T(s,a)=s'$.

The set of nodes in the search tree is $\mathcal{N}$. 
The set of children nodes of a node $n$ is $\mathcal{C}(n)$, and its parent is $\mathrm{par}(n)$. 
All nodes have exactly one parent, except for the root node $n_1$, which corresponds to the initial state $s_1$.
The set of ancestors of a node $n$ is $\mathrm{anc}(n)$, and we define $\mathrm{anc}_*(n) = \mathrm{anc}(n) \cup \{n\}$.
Similarly, the set of descendants of a node $n$ is $\mathrm{desc}(n)$, and $\mathrm{desc}_*(n) = \mathrm{desc}(n) \cup \{n\}$.
We also use the notation $n' \prec n$ for $n' \in \mathrm{anc}(n)$, and $n' \preceq n$ for $n' \in \mathrm{anc}_*(n)$.
The set of nodes representing goal states $\mathcal{S}_g$ is denoted $\mathcal{N}_g$.
The search algorithm incurs a loss of $\ell : \mathcal{N} \to (0, \infty]$ for each node expansion.
For any node $n$, the \emph{path loss} is defined as $g(n) = \sum_{n' \preceq n} \ell(n')$, \ie the sum of losses from the root to $n$.
We assume that all algorithms discussed in this work incur loss $\ell(n) = 1$ for all nodes $n$, resulting in path loss $g(n)$ being equivalent to node depth $d(n)+1$.

A policy $\pi$ assigns probabilities to child nodes, where $\pi(\cdot | n)$ is a distribution over $\mathcal{C}(n)$.
The induced \emph{path probabilities} are defined by $\pi(\underline{n}|\overline{n}) = \pi(\underline{n}|n)\pi(n | \overline{n})$ for any $\overline{n} \preceq n \preceq \underline{n}$, with $\pi(n|n) = 1$.
We define $\pi(n|\overline{n}) = 0$ when $\overline{n} \npreceq n$.
For convenience, we denote $\pi(n) = \pi(n|n_1)$.
Some search algorithms utilize a \emph{heuristic} $h : \mathcal{N} \to \mathbb{R}_{\ge 0}$ which assigns non-negative values to nodes which estimate the path loss from the current node to a goal node.

\subsection{Background}
\label{sec:prelim:background}
\textbf{Best-First Search (BFS)}.
BFS \cite{pearl1984heuristics} expands nodes by increasing \emph{cost}.
The search is initialized with the root node in a priority queue. 
In each search step, the lowest cost node is removed from the queue and is expanded, with the generated nodes added into the queue.
A node is not expanded by BFS if its underlying state has previously been expanded.
BFS will halt when either there are no nodes left in the queue, a solution is found, or a search budget (in terms of the number of expansions) has been exceeded.

\textbf{Levin Tree Search}.
Levin Tree Search (LTS) \cite{orseau2018single} is a BFS algorithm which uses
\begin{equation}
  \label{eq:lts_cost}
  \varphi_{\mathrm{LTS}}(n) = \frac{d(n)+1}{\pi(n)}
\end{equation}
as the cost function.
LTS is guaranteed to expand no more than $\left( d(n^*) + 1 \right) / \pi(n^*)$ nodes until the first solution node $n^* \in \mathcal{N}_g$ has been generated~\citep{orseau2018single}. 

\textbf{The \RLTS{} Algorithm}.
\RLTS{} \cite{orseau2024exponential} is a \emph{rerooting} algorithm which implicitly starts an LTS search rooted at every node of the tree. 
For any node $n$, the base cost function $c_t^r(n)$ represents the cost of $n$ with respect to the instantiated LTS search rooted at node $n_t \prec n$: 
\begin{equation}
  \label{eq:rlts_base_cost}
  c_t^r(n) = \sum\limits_{n_t \prec n' \preceq n } \dfrac{1}{\pi(n' | n_t)}.
\end{equation}
A \textit{rerooter} assigns a \emph{rerooting weight} to each of these LTS searches,
which \RLTS{} uses to split the overall search effort between these searches
proportional to their assigned weight.
Like LTS, \RLTS{} is a BFS algorithm using the cost function
\begin{equation}
  \label{eq:rlts_cost}
  c^r(n) = \min\limits_{n_t \prec n} \frac{1}{w_t} c_t^r(n),
\end{equation}
where $w_t \ge 0$ is the rerooting weight for the LTS search anchored at the ancestor $n_t$ of $n$.

\citet{orseau2024exponential} also provide
theoretical guarantees on the number of node expansions required until 
the first solution node has been found,
which depends on the quality of the policy and the rerooter. Suppose \RLTS{} finds the solution node $n^*$ at some step $T$.
A \emph{subtask decomposition} of the path from $n_1$ to $n^*$ is a subset of the ancestors of $n^*$, which must include both $n_1$ and $n^*$,
viewed as subtask boundaries. 
For example, for the game of Sokoban, one subtask decomposition is all the ancestors
where the agent has pushed a box on a goal spot. 
Let $\mathcal{D}(n^*)$ be the set of all such subtasks decompositions of $n^*$.
For $D\in\mathcal{D}(n^*)$, the selected ancestors of $n^*$ are expanded at steps $T_1, T_2, \dots, T_{|D|}$,
where necessarily $T_1 = 1$ and $T_{|D|} = T$.
Then the number of search steps $T$ that \RLTS{} takes to visit $n^*$  is bounded by \citep[Corollary 12, adapted]{orseau2024exponential}
\begin{align}\label{eq:rlts_subtask_bound}
    T \leq 1+~\min_{D \in \mathcal{D}(n^*)} ~\max_{i < |D|}  ~\frac{w_{<T}}{w_{T_i}}~ c^r_{T_i}(n_{T_{i+1}})\,,
\end{align}
where $w_{<T} = \sum_{j < T} w_t$ is the cumulative rerooting weight of the nodes expanded up to step $T$ (excluded),
$w_{T_i} / w_{<T}$ is the time share allocated to the LTS instance started at the $i$th (shallowest) ancestor of $n^*$
in the subtask decomposition,
$c^r_{T_i}(n_{T_{i+1}})$ is the bound on the number of search steps this LTS instance takes to reach the next subtask boundary $n_{T_{i+1}}$,
and $\max_{i < D}$ corresponds to the most `difficult' subtask in the decomposition. This subtask decomposition can allow \RLTS{} to expand exponentially fewer nodes than LTS; see \cite{orseau2024exponential} for a detailed description of \RLTS{}. 
See \cref{appendix:rerooter_properties} for an example.

\subsection{Problem Definition}
\label{sec:prelim:prob_defn}
Our objective formulation is the same as given in \citet{orseau2021policy} and \citet{tuero2025subgoalguided}, which is to solve a set $K$ of problem instances as \emph{quickly} as possible.
We use the \emph{total search loss} as our objective metric.
The \emph{search loss} $L(S,n)$ of algorithm $S$ is the sum of losses $\ell(n')$ incurred for each node $n'$ expanded up to and including $n$. 
Here, $n$ could either be a solution node in $\mathcal{N}_g$ or the last node expanded before a given budget is exceeded.
The \emph{total search loss} $\sum_{k \in K} L(S,n_k)$ is the sum of individual search losses algorithm $S$ incurs on problem $k$.
We assume $\ell(n)=1$ everywhere, thus minimizing the search loss corresponds to minimizing the total number of expansions.


\section{Structure-Induced Rerooters}
\label{sec:method}
In this section, we describe how rerooting can be instantiated in practice to exploit structural information during policy tree search. 
\citet{orseau2024exponential} focused on analyzing the formal properties of \RLTS{} for a \emph{given} rerooter, and left open the question of how to define or learn the rerooter.
We address this gap by presenting two complementary rerooters that derive rerooting weights from signals available during search: a clustering-based rerooter that captures global structure in the state space, and a heuristic-based rerooter that leverages local cost-to-go information. 
Together, these designs illustrate how rerooting can be learned or instantiated automatically, without the complexities of explicit subgoal reconstruction, and provide a practical foundation for scalable rerooting in complex domains.

\subsection{Global Structure–Induced Rerooting}
\label{sec:method:rltsl}

\textbf{Motivation}.
Progress in many planning problems is characterized by transitions between distinct regions of the state space. 
When search reaches a new region, it is often beneficial to concentrate effort there; when it keeps expanding within the same region without making progress, effort should be redirected elsewhere in the tree.
Such transitions may correspond to events like entering a new room or acquiring a key that unlocks additional states. 
To capture this behavior, the global rerooter derives rerooting weights from coarse state-space structure by clustering states based on connectivity. 
It then allocates effort across clusters rather than individual nodes, reflecting global organization while relying only on state observations and a Boolean goal test, without explicit subgoal reconstruction during search.

\textbf{Global Structure}.
A key part of our method is the Leiden clustering algorithm \cite{traag2019louvain},
which is an improvement over the Louvain clustering algorithm \cite{blondel2008fast}, both in terms of runtime complexity and cluster quality.
The Leiden algorithm is an iterative algorithm that creates a hierarchy of graphs $(G_1, \dots, G_N)$ where $G_{i+1}$ is a clustering of graph $G_i$.
In each iteration $i$, a hierarchical cluster graph $G_{i+1} = (V_{i+1}, E_{i+1})$ is created from $G_{i} = (V_i, E_i)$,
where $V_{i+1}$ is a partition of $V_i$ with each $v \in V_{i+1}$ being a part in the partition,
and $E_{i+1}$ containing edges $(v_{i+1}, v'_{i+1})$ for $v_{i+1}, v'_{i+1} \in V_{i+1}$ if there is a $u \in v_{i+1}$ and $v \in v'_{i+1}$ such that $(u,v) \in E_i$.
The process continues until either there is no progress made (i.e., $G_{i+1} = G_i$) or $G_{i+1}$ contains a single node.

Our first rerooter, which we denote \RLTSL{}, uses the Leiden clustering algorithm to find structures in the induced subgraph of the state space.
During the search, the induced subgraph of the underlying state space is constructed iteratively as the tree is built. 
When the search generates a tree-node representing an unseen state, it adds a graph-node to the induced subgraph with an edge linking to the graph-node represented by the parent tree-node.
When the search generates a tree-node representing a previously seen state, it adds an edge to the induced subgraph linking the graph-node represented by the parent tree-node to the graph-node representing the state of the generated tree-node.

The Leiden algorithm creates cluster graphs at increasing levels of the hierarchy by maximizing the \emph{modularity}, a metric that quantifies the relation between the edge connectivity within a cluster and the connectivity between clusters. 
High modularity results in dense clusters with many connections within each cluster, and sparse connections between clusters. 
\citet{evans2023creating} found that the Louvain algorithm using this modularity metric finds meaningful structures in the state spaces they studied, such as two neighboring clusters corresponding to states of two adjacent rooms in a gridworld environment. 

\begin{figure}[ht!]
\begin{center}
\centerline{\includegraphics[width=1.0\columnwidth]{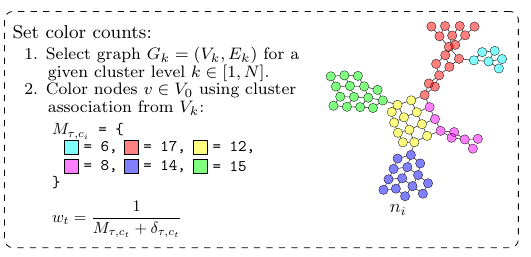}}
\caption{
    \textbf{\RLTSL{} Rerooter}. 
    Each node in the tree is assigned a color corresponding to the cluster it is associated with in the cluster graph at level $k$ of the hierarchy.
    The rerooting weight for the expanded node is then computed using the color count map.
}
\label{fig:overview}
\end{center}
\end{figure}

\textbf{Rerooting Weight}.
The \RLTSL{} rerooter is visually depicted in Figure~\ref{fig:overview}.
We invoke the Leiden algorithm at search steps $\gamma^i$ for $i \in \mathbb{Z}_{\ge 0}$
on the incrementally induced subgraph $G_0$, with hyperparameter $\gamma > 0$. 
From the resulting hierarchy, we select a cluster graph $G_k = (V_k, E_k)$ at hierarchy level $k$ (a hyperparameter), where each node $U \in V_k$ corresponds to a set of nodes in $G_0$ (and thus to a set of search-tree nodes). 
We assign each tree node $n$ a color $c \in \{1, \dots, |V_k|\}$ indicating the node $U$ that contains it; we refer to this assignment as a \emph{coloring}.
Colors may change across invocations: a node $n$ can have color $i$ on the $(\tau)$th invocation, and color $j \ne i$ on the $(\tau+1)$th invocation.

At search step $t$, the Leiden algorithm has been invoked $\tau = \floor{\log_\gamma(t)}$ times, resulting in $\tau$ colorings, with the most recent coloring occurring on search step $t' = \floor{\gamma^\tau} \le t$.
For the search steps between $t'$ and $t$, some nodes may be expanded which do not have a color, as they were not present in the induced subgraph when the most recent $(\tau)$th coloring occurred.
For these nodes, we assume that they belong to the same clustering as their parent and are thus given their parent's color. 
Using a proxy value to defer expensive computations has been used previously in heuristic search,
such as when computing the edge cost \cite{narayanan2017heuristic}
or the heuristic value \cite{karpas2018rational} is costly.

Let $M_{ \tau , c}$ be the number of nodes with color $c$ from the $(\tau)$th coloring, and
\begin{equation}
    \delta_{\tau, c} = \left|\{ \floor{\gamma^\tau} < \ell \le t : c_\ell = c  \}\right|
\end{equation}
be the count of nodes expanded since step $\floor{\gamma^\tau}$ which also have color $c$.
Then, \RLTSL assigns the rerooting weight $w_t$ for the node $n_t$ expanded at search step $t$ as
\begin{equation}
    \label{eq:rltsl}
    w_t = \frac{1}{ M_{ \tau , c_t} + \delta_{\tau, c_t} }
\end{equation}
where $c_t$ is the color associated with $n_t$.
The value of $M_{ \tau , c_t} + \delta_{\tau, c_t}$ approximates the size of the cluster $c_t$. Note that the cluster size is only approximate because the search will potentially generate and expand nodes that were not in $G_0$ during the last execution of the clustering algorithm. 

This gives us the desirable property that a low color count results in a larger weight $w_t$ and lower search cost.
Similarly, a higher color count results in a smaller weight $w_t$, and higher cost.
As the search expands more nodes with color $c$, the count increases and thus subsequent nodes expanded from the same cluster receive lower weights and higher costs.
See Algorithm~\ref{alg:rltsl} in Appendix~\ref{app:alg_impl} for its pseudocode.

\textbf{Runtime Complexity}.
While the induced subgraph of the state space is built incrementally, Leiden cluster graphs are recomputed on the most recent graph under a geometric update schedule to limit runtime overhead.
Although cluster assignments (and sizes) may change across updates, we keep the rerooting weights of previously expanded nodes as fixed:
retroactively updating them would make the search tree internally inconsistent, since an ancestor could induce different costs for different descendants depending on when they were generated.
Although repeated clustering may appear expensive, we show that \RLTSL{}'s runtime matches BFS up to constant factors. 
Unlike prior subgoal-based approaches, it avoids queries to compute-intensive models such as subgoal generators \cite{tuero2025subgoalguided}.

\begin{theorem}
    \label{thm:rltsl}
    Let $N$ be the number of node expansions, $D$ the depth of the max-depth node after $N$ expansions, $k$ the cluster hierarchy level, $G=(V,E)$ the underlying state space graph for environment domain.
    Assume the environment has a uniform branching factor $b < \infty$.
    If Leiden clustering is invoked at search steps following a geometric schedule with factor $\gamma > 1 + 1/\epsilon$ for some constant $\epsilon > 0$,  then \RLTSL{} has $O(bN \log N + DN)$ time complexity.
\end{theorem}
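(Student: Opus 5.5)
The plan is to split the running time of \RLTSL{} into three parts and bound each separately: (i) the ordinary best-first search machinery, (ii) computing the \RLTS{} cost $c^r$ of every generated node, and (iii) the repeated Leiden clustering together with the colour bookkeeping. For (i), $N$ expansions generate at most $bN$ nodes, since the branching factor $b$ is uniform. Each generated node costs one priority-queue insertion of $O(\log(bN))$ time, and each expansion one extraction. Updating the induced subgraph $G_0$ (one graph-node and/or edge per generated node, with a hash lookup on the state) is $O(1)$ per generated node. Treating $b$ as a constant inside the logarithm, this part is $O(bN\log N)$.

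For (ii), the cost $c^r(n)=\min_{n_t\prec n} c^r_t(n)/w_t$ involves a minimum over all ancestors, of which there are at most $D$. I would compute it incrementally from the parent. From \eqref{eq:rlts_base_cost}, for each ancestor $n_t$ we have $c^r_t(n)=c^r_t(\mathrm{par}(n))+1/\pi(n\mid n_t)$ and $\pi(n\mid n_t)=\pi(n\mid\mathrm{par}(n))\,\pi(\mathrm{par}(n)\mid n_t)$. So each node stores its vector of $c^r_t$ and $\pi(\cdot\mid n_t)$ values, one per ancestor; equivalently, the path to the expanded node is walked when its children are generated. Updating these and taking the minimum costs $O(D)$ per expanded node, since its children share the ancestor path and only a per-child multiplicative factor differs. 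This gives $O(DN)$, plus $O(bN)$ for the per-child work, which is absorbed. The weight $w_t$ of \eqref{eq:rltsl} is known at expansion time: the colour is inherited from the parent if the node is uncoloured, and the counts $M_{\tau,c}$, $\delta_{\tau,c}$ are stored in a hash map. Reading and updating these is $O(1)$ per expansion.

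For (iii), Leiden is invoked at steps $\lfloor\gamma^i\rfloor\le N$. At invocation $i$ the graph has $O(b\gamma^i)$ vertices and edges, because it is built from the nodes generated so far. I would take a single Leiden run, or at least its first $k$ hierarchy levels needed for $G_k$, to cost $O(m\log m)$ on a graph with $m$ edges. This is the standard near-linear bound for Louvain/Leiden with a bounded number of passes. Recolouring all nodes and recomputing $M_{\tau,c}$ is linear in the graph size. Summing the geometric series gives
\[
\sum_{i\le\log_\gamma N} b\gamma^i\log(b\gamma^i)\le b\log(bN)\,N\frac{\gamma}{\gamma-1}.
\]
The condition $\gamma>1+1/\epsilon$ gives $\gamma/(\gamma-1)<1+\epsilon$, a constant. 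Hence clustering in total is $O(bN\log N)$. Adding (i)--(iii) yields $O(bN\log N+DN)$.

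\textbf{Main obstacle.} The delicate step is justifying the per-invocation cost of Leiden, since its worst-case iteration count is not formally linear. I would state as an explicit assumption, citing \citet{traag2019louvain}, that each run with a fixed hierarchy level $k$ is $O(m\log m)$. A secondary point is ensuring that $c^r$ for each child really needs only $O(D)$ work shared with its siblings rather than $O(bD)$ in total. If that sharing fails, the bound becomes $O(bDN)$, so the incremental scheme in (ii) must reuse the parent's ancestor vector across all $b$ children and apply only a scalar per-child correction.
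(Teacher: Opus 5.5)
Your three-part decomposition is the paper's: priority-queue work at $O(\log(bN))$ per generated node, an ancestor-path traversal to evaluate $c^r$, and geometric amortisation of the Leiden calls using $\gamma/(\gamma-1)<1+\epsilon$. In (iii) you differ only in the per-call Leiden cost. The paper takes it to be $O(k|E|)$, linear in the graph (citing Sahu et al.), so clustering totals $O(bN\gamma/(\gamma-1))$. Your more conservative $O(m\log m)$ gives $O(bN\log N)$. Both are absorbed by the first term, so this part is fine.

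The step that fails as written is the sharing argument in (ii). For a child $n'$ of the expanded node $n$, put $x=1/\pi(n'\mid n)$, $a_t=c^r_t(n)/w_t$ and $\beta_t=1/\bigl(w_t\,\pi(n\mid n_t)\bigr)$ for $n_t\preceq n$. Then $c^r(n')=\min_{n_t\preceq n}(a_t+x\beta_t)$. The child-specific factor $x$ sits inside the minimum, so siblings can have different minimising ancestors. For example, take the lines $0+10x$ (for $n_t=n$ with $w_n=1/10$) and $15+x$ (an older ancestor). The first is smaller for a child with $x=3/2$, the second for a sibling with $x=3$. There is therefore no shared minimum to which a scalar per-child correction can be applied, and the naive cost is $O(bD)$ per expansion, i.e.\ the $O(bDN)$ you feared.

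The paper's proof does not confront this either. It charges the traversal once per expansion, which is justified only by treating $b$ as a constant (then $O(bDN)=O(DN)$). That is the simplest way to close your argument. If you want $b$ to stay explicit, replace the scalar correction by a lower-envelope query. Passing from $\mathrm{par}(n)$ to $n$ multiplies every existing slope $\beta_t$ by the same positive factor $1/\pi(n\mid\mathrm{par}(n))$, so slope order is preserved. A slope-sorted list of lines can therefore be materialised from the parent's when $n$ is expanded, with one insertion, in $O(D)$. The lower envelope of these $O(D)$ lines can then be built in $O(D)$, and each of the $b$ children answered by binary search in $O(\log D)$. This gives $O(DN+bN\log N)$ in total.
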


The proof is in Appendix~\ref{app:rltsl_proof}.
The uniform branching-factor assumption is standard in tree-search–based planning and control, where each state has a bounded number of successor states determined by the action set.

\subsection{Local Structure–Induced Rerooting}
\label{sec:method:rltsh}
\textbf{Motivation}.
Global structural cues are not always needed for effective rerooting:
local information at individual nodes can distinguish between otherwise similar regions of the search tree. 
For example, if the root has two structurally identical subtrees, a rerooter based solely on structural expansion history would allocate equal effort to both; 
yet if one subtree is estimated to be closer to the goal, it should be prioritized.
We therefore consider a local rerooter that leverages heuristic cost-to-go estimates to assign weights reflecting the local geometry of the search space.

\textbf{Rerooting Weight}.
The rerooter, which we denote as \RLTSH{}, uses the heuristic value of a node to define its rerooting weight directly.
The rerooting weight is given by
\begin{equation}
    \label{eq:rltsh}
    w_1 = 1; \quad w_t = \exp \left(- \alpha \frac{h(n_t)}{h(n_1)} \right),
\end{equation}
where $\alpha > 0$ is a constant constant controlling how strongly rerooting effort is concentrated on nodes with low heuristic value, and $h(n_t)$ is the heuristic value corresponding to node $n_t$.

This choice is natural for several reasons.
First, it is monotone in the heuristic: if $h(n_i) < h(n_j)$, then $w_i > w_j$.
Thus, nodes estimated to be closer to the goal receive a larger rerooting weight and therefore induce a lower rerooted search cost.
Normalizing by the root heuristic $h(n_1)$ makes the rerooting weight depend on relative rather than absolute heuristic scale. 
This is useful when different environments, or different instances within the same environment, exhibit substantially different effective solution-length scales.
Second, the exponential map is smooth and strictly positive.
Even nodes with relatively poor heuristic values retain non-zero weight, which is desirable when the heuristic is imperfect and should influence, rather than fully determine, the rerooting decision.
Third, the parameter $\alpha$ acts as an inverse-temperature parameter.
When $\alpha$ is small, the weights are more similar across nodes, so the rerooter behaves conservatively. 
As $\alpha$ increases, the weight mass becomes more concentrated on nodes whose heuristic values are small relative to the root, causing rerooting to focus more aggressively on regions that appear to have made substantial progress toward a goal.

A useful consequence of this normalization is that this rerooting formulation is invariant to multiplicative rescaling of the heuristic. 
The exponential form also gives the rerooter a useful normalized interpretation.
For any set $I$ of candidate rerooting points,
\begin{equation*}
    \frac{w_t}{\sum_{i \in I} w_i} 
    = \frac{\exp(-\alpha h(n_t) / h(n_1))}{\sum_{i \in I} \exp(-\alpha h(n_i) / h(n_1))},
\end{equation*}
which is precisely a softmax over the scores $-\alpha h(n_i) / h(n_1)$.
Thus, the rerooter distributes search effort smoothly across candidate local roots, with larger shares assigned to nodes whose heuristic values are small relative to the root.
This is well aligned with the role of rerooting weights in \RLTS{}, where the weights determine how computation is shared among competing local searches.
See Algorithm~\ref{alg:rltsh} in Appendix~\ref{app:alg_impl} for its pseudocode.

\subsection{Hybrid Structure–Induced Rerooting}
\label{sec:method:rltslh}
\textbf{Motivation}.
While our global and local rerooters are each useful in isolation, they capture complementary aspects of the search problem and exhibit distinct limitations.
The heuristic-based rerooter provides a lightweight goal-directed signal, but on its own it depends entirely on local cost-to-go estimates and can therefore be sensitive to heuristic noise or miscalibration.
In particular, if the heuristic is only weakly informative in some region of the tree, the rerooter may overemphasize nodes whose low heuristic values do not correspond to meaningful structural progress.
By contrast, the clustering-based rerooter relies on coarse connectivity in the induced state-space graph and is agnostic to direct goal proximity, making it less responsive to local progress toward a solution.
Consequently, it may spread search effort across regions that are structurally distinct yet equally far from the goal.

These two rerooters are therefore naturally complementary.
The global rerooter captures coarse structural bottlenecks and allocates effort across broader regions of the search space, while the local rerooter refines this allocation using heuristic information that reflects estimated progress within those regions.
This motivates a hybrid rerooter that combines both signals additively.

\begin{figure*}[ht!]
\begin{center}
\centerline{\includegraphics[width=1.0\linewidth]{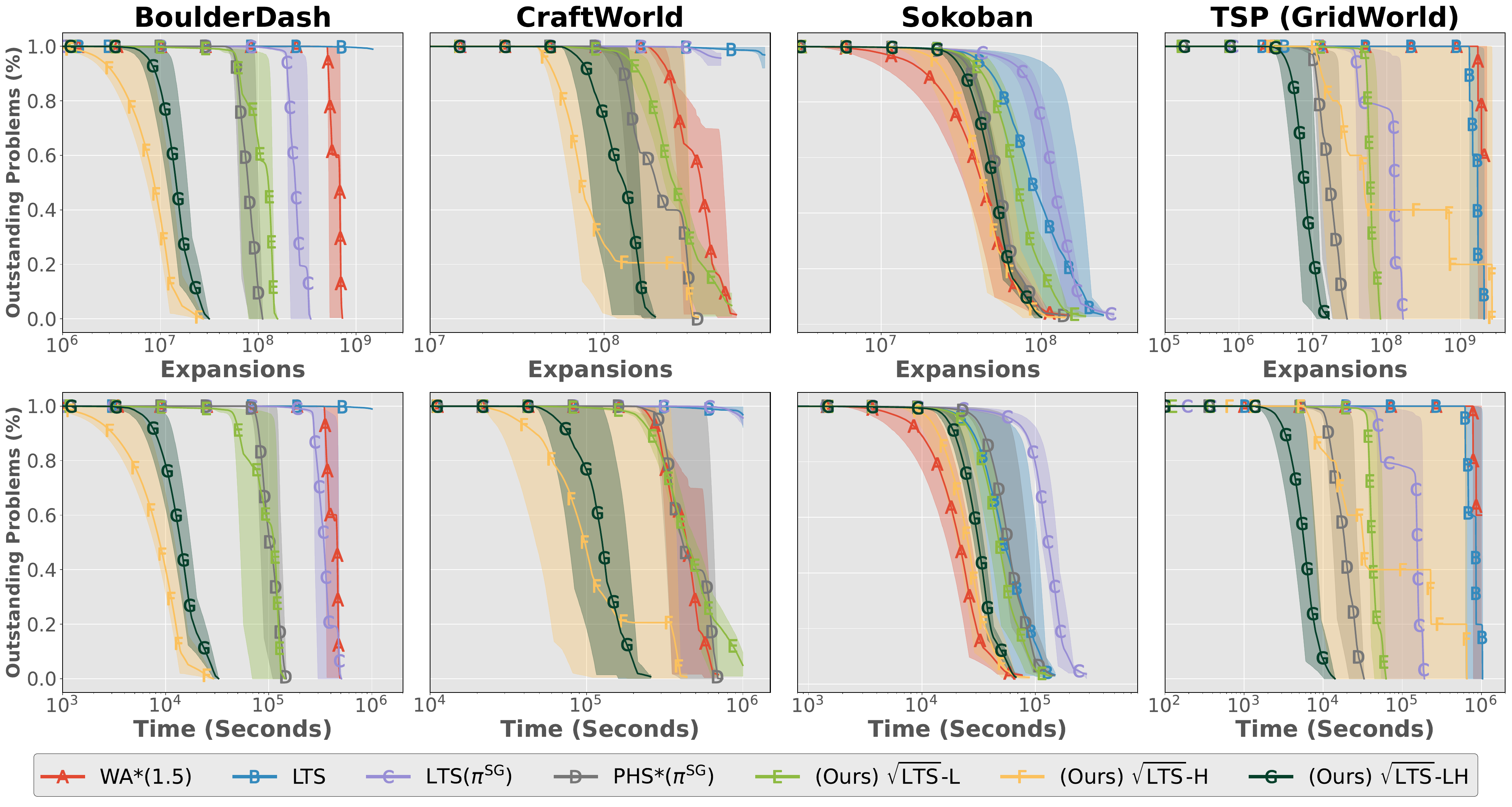}}
\caption{
    Average online training loss with respect to expansions (top) and time (bottom), in log-scale.
    Shaded regions show the minimum and maximum.
    Time measures the sum an algorithm spends on each problem across all threads used during training.
    \vspace*{-6mm}
}
\label{fig:train_all}
\end{center}
\end{figure*}

\textbf{Rerooter Weight}.
The hybrid rerooter combines these signals by using global structure to set a stable coarse allocation of effort, and heuristic estimates to refine priorities within each region.
This yields a coarse-to-fine rerooting rule that is less sensitive to heuristic miscalibration while retaining goal-directed adaptivity across domains.
We denote this hybrid rerooter as \RLTSLH{}, and use the combined rerooting weights given by  

\begin{equation}
    \label{eq:rltslh}
    w_t = u_a \frac{1}{ M_{ \tau , c_t} + \delta_{\tau, c_t} } + u_b \exp\left(- \alpha \frac{h(n_t)}{h(n_1)} \right),
\end{equation}
where $c_t$ is the color associated with $n_t$, $M_{ \tau , c_t}$ and $\delta_{\tau, c_t}$ follow the same definition as in Equation~\ref{eq:rltsl}, and $u_a$ and $u_b$ are mixing coefficients.
Unless stated otherwise, \RLTS{} uses $u_a = u_b = 1$.
As before, we set $w_1=1$ for the root node.
See Algorithm~\ref{alg:rltslh} in Appendix~\ref{app:alg_impl} for its pseudocode.

\textbf{Additive Rerooter Guarantees}.
The hybrid rerooter in Equation~\ref{eq:rltslh} is not only a pragmatic way to blend rerooters, it also has a principled effect on how \RLTS{} allocates effort across subtasks.
Intuitively, adding rerooters allows the search to fall back on whichever signal is informative in a given region of the tree, provided the combination remains balanced so that one component does not dominate the overall time allocation.
The following result makes this precise by extending the standard subtask-decomposition bound.
\newcommand{\thmwbalanced}[2]{
    Let $w_a$ and $w_b$ be two rerooters, with relative weights $u_a\geq 0$ and $u_b \geq 0$.
    Let $w = u_a w_a + u_b w_b$ be the rerooter used by \RLTS{}.
    When visiting the node $n_T$ at step $T$,
    for any $C\geq 1$ satisfying  $\frac1C \leq \frac{u_a w_{a, <T}}{u_b w_{b, <T}} \leq C$,
    then the number of node visits is bounded by
\begin{#1*}
T \leq 1 + (C+1)\times  #2 
~\min_{D \in \mathcal{D}(n^*)}~
~\max_{i < |D|} ~
\min\left\{
\frac{w_{a,<T}}{w_{a,T_i}},~
\frac{w_{b,<T}}{w_{b,T_i}}
\right\}
\, c^r_{T_i}(n_{T_{i+1}}).
\end{#1*}
}
\begin{theorem}\label{thm:balanced_rerooters}
\thmwbalanced{multline}{\\}
\end{theorem}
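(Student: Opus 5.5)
The plan is to start from the subtask-decomposition bound of \citet{orseau2024exponential} in Equation~\ref{eq:rlts_subtask_bound}, applied to \RLTS{} run with the combined rerooter $w = u_a w_a + u_b w_b$. That bound gives
\[
T \leq 1 + \min_{D\in\mathcal{D}(n^*)} \max_{i<|D|} \frac{w_{<T}}{w_{T_i}}\, c^r_{T_i}(n_{T_{i+1}}),
\qquad w_{<T} = u_a w_{a,<T} + u_b w_{b,<T},
\]
since cumulative weights are linear in the rerooter. The whole proof then reduces to a per-term inequality. For every $i$, I will show that the time-share factor $w_{<T}/w_{T_i}$ is at most $(C+1)\min\{w_{a,<T}/w_{a,T_i},\, w_{b,<T}/w_{b,T_i}\}$. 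Because the factor $c^r_{T_i}(n_{T_{i+1}})$ and the outer $\min_D \max_i$ are left untouched, plugging this in termwise gives the statement.

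For the per-term inequality, I bound the denominator from below by each component separately:
\[
w_{T_i} \geq u_a w_{a,T_i} \quad\text{and}\quad w_{T_i} \geq u_b w_{b,T_i},
\]
which holds by nonnegativity of $u_a, u_b$ and the weights. I then bound the numerator using the balance hypothesis. From $u_b w_{b,<T} \leq C\, u_a w_{a,<T}$ we get $w_{<T} \leq (C+1) u_a w_{a,<T}$, and symmetrically, from $u_a w_{a,<T} \leq C\, u_b w_{b,<T}$, we get $w_{<T} \leq (C+1) u_b w_{b,<T}$. Combining these gives
\[
\frac{w_{<T}}{w_{T_i}} \leq \frac{(C+1)u_a w_{a,<T}}{u_a w_{a,T_i}} = (C+1)\frac{w_{a,<T}}{w_{a,T_i}},
\]
and the analogous bound holds for $b$. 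Since both bounds hold simultaneously, the ratio is at most $(C+1)$ times their minimum. The $u$'s cancel, which is why the final bound involves only the unmixed rerooters.

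The main obstacle is the degenerate cases. If $u_a = 0$ or $w_{a,T_i} = 0$, the $a$-branch of the argument divides by zero. In that case I interpret $w_{a,<T}/w_{a,T_i}$ as $+\infty$, so the minimum falls back on the $b$-branch, which must then be valid on its own. The balance hypothesis with finite $C$ already forces $u_a w_{a,<T}$ and $u_b w_{b,<T}$ to be both positive or both zero, and the root has $w_1 = 1 > 0$, so the cumulative sums are positive. Once these conventions are fixed, the remaining steps are routine. One last check is that Equation~\ref{eq:rlts_subtask_bound} applies verbatim to any nonnegative rerooter, and in particular to the combined one $w$, with the same decomposition steps $T_i$; this holds because the $T_i$ depend only on the run, not on how $w$ is written.
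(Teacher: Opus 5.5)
Your proposal is correct and follows essentially the same route as the paper: lower-bound $w_{T_i}$ by each weighted component $u_a w_{a,T_i}$ and $u_b w_{b,T_i}$, use the balance hypothesis to get $w_{<T} \leq (C+1)\,u_a w_{a,<T}$ and $w_{<T} \leq (C+1)\,u_b w_{b,<T}$, cancel the $u$'s, and plug into Equation~\ref{eq:rlts_subtask_bound}. Your treatment of the zero-weight cases is slightly more careful than the paper's, which simply assumes all quantities are strictly positive.
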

The proof is in Appendix~\ref{appendix:rerooter_properties}.
The constant $C\geq 1$ can be controlled by adjusting the weights $u_a$ and $u_b$.


\section{Experiments}
The goal of our experimental evaluation is twofold. 
First, we aim to establish rerooting as a flexible and scalable abstraction for exploiting structural information in policy tree search, avoiding explicit subgoal generation and reasoning while retaining much of their benefit through simple structural signals. 
Second, we evaluate the practical impact of rerooting on learning efficiency by measuring improvements in online training sample efficiency relative to non-rerooted baselines across a range of domains.

\subsection{Baselines}
We compare our methods against LTS \cite{orseau2018single}, as at the time of this writing, there are no comprehensive empirical evaluations of \RLTS variants against LTS.
We also compare against $\SLTS$ and $\SPHS$ \cite{tuero2025subgoalguided}, two SGPS instantiations that can be trained online and use the same clustering method as \RLTSL{}.
SGPS is a key baseline because it
(i) shares the same clustering component,
(ii) generates discrete subgoals via a VQ-VAE \cite{van2017neural} whereas our approach uses soft subtasks without explicit subgoal generation, and 
(iii) achieved state-of-the-art results on the domains we consider.
Finally, we include Weighted A* (WA*) \cite{pohl1970heuristic}, a non-policy method that can be trained online via the bootstrap process.
We use a weight of $1.5$ for WA*, which performs well in similar settings \cite{orseau2021policy,tuero2025subgoalguided}.
We omit $\HIPSE$ because it requires a solution dataset which is unavailable for the complex environment studied, and prior work found $\SLTS$ and $\SPHS$ outperformed it \cite{tuero2025subgoalguided}.

\subsection{Environment Domains}

\textbf{BoulderDash}: The agent collects a number of diamonds to unlock the exit. 
Some diamonds are locked in rooms that require a key. 
The environment contains dirt cells, which disappear when the agent walks over them,
resulting in complex state observations and a large search space.
\citet{tuero2025subgoalguided} provided a \emph{standard} problem set and a \emph{hard} problem set, and we use the more challenging hard problems in our analysis.

\textbf{CraftWorld}: The agent collects raw materials and interacts with workbenches and furnaces to craft intermediary items, which can further be crafted into final products \cite{andreas2017modular}.
The environment can be in a deadlocked state by crafting an incorrect item, since items are consumed once used in a recipe.
Similar to BoulderDash, we use the \emph{hard} problems from \citet{tuero2025subgoalguided}.

\textbf{Sokoban}: The agent must push boxes into designated goal locations, avoiding deadlocks due to boxes getting stuck along walls.
Sokoban is PSPACE-hard \cite{culberson1997sokoban}, and we use the first 50,000 training problems and 1,000 test problems from Boxoban \cite{boxobanlevels}.

\textbf{Traveling Salesman Problem (TSP)}: A gridworld version of the TSP where the agent must visit specified city locations, then return to the starting location. 
To increase the difficulty of this domain, we use a modified version in which the agent deadlocks if it revisits a city other than the start city.
This prevents trivial solutions such as blindly visiting each city without planning the steps in between cities.

\subsection{Training and Testing Procedure}
All methods use Bootstrap training \cite{arfaee2011learning}. 
We randomly initialize neural networks for the policy and/or heuristic, then run each search algorithm on a subset of training problems with an initial expansion budget using the current policy/heuristic. 
Algorithms update their policy/heuristic from solution trajectories on solved problems. 
If an algorithm solves no new problems in a sweep over the training set, we increase the expansion budget and repeat the sweep.
After each sweep, we evaluate on a separate validation set under the current budget; training ends once at least 95\% of validation problems are solved.
To handle methods which struggle to make progress, we impose a maximum training time of 1,000,000 seconds (approximately 11.5 CPU-days), stopping once this limit is reached.
Reported time is the sum of time each algorithm uses on each problem, over all the threads used during training.
Additional details are in Appendix~\ref{appendix:impl_details} and Appendix~\ref{appendix:experiment_details}.

Each domain uses 10,000 training problems and 1,000 validation problems, except Sokoban (49,000 training and 1,000 validation).
We repeat training over 5 seeds, which determine network initialization and the train/validation splits. 
For training-efficiency plots, we report the min/mean/max number of outstanding problems versus expansions and wall-clock time.
For final evaluation, we test on a held-out test set with a budget of 512,000 node expansions and report (averaged over seeds) problems solved, expansions-to-solution, solution length, and wall-clock time.

\subsection{Results}

\begin{table}[t!]
    \caption{
        Test results averaged over all seeds.
        Time measures the sum an algorithm spends on each problem across all threads used during training, in seconds.
    }
    \label{table:test_all}
    \vspace{-3mm}
    \begin{center}
    \begin{small}
    \resizebox{0.94\columnwidth}{!}{%
    \begin{sc}
        \begin{tabular}{lrrrr}
    \toprule
    Algorithm & Solved & Expansions & Length & Time  \\
    \midrule \midrule
    
    \multicolumn{5}{c}{BoulderDash} \\
    \midrule
    $\text{WA*}(1.5)$ & 100 & 480.52 & \textbf{68.56} & 0.73 \\
    LTS & 10 & 195,451.08 & 68.92 & 119.86 \\
    $\text{LTS}(\pi^\text{SG})$ & 100 & 202.38 & 83.09 & 1.53 \\
    $\text{PHS*}(\pi^\text{SG})$ & 100 & 359.86 & 86.61 & 2.70 \\
    \midrule
    $\RLTSL$ & 100 & 131.18 & 71.63 & 0.61 \\
    $\RLTSH$ & 100 & \textbf{92.37} & 69.47 & 0.60 \\
    $\RLTSLH$ & 100 & 92.68 & 69.55 & 0.58 \\

    \midrule \midrule 
    
    \multicolumn{5}{c}{CraftWorld} \\
    \midrule
    $\text{WA*}(1.5)$ & 100 & 1,888.35 & \textbf{122.16} & 3.62 \\
    LTS & 100 & 306,224.22 & 142.37 & 373.44 \\
    $\text{LTS}(\pi^\text{SG})$ & 100 & 345,096.42 & 170.99 & 869.89 \\
    $\text{PHS*}(\pi^\text{SG})$ & 100 & 1,413.00 & 126.01 & 8.67 \\
    \midrule
    $\RLTSL$ & 100 & 8,802.62 & 183.74 & 44.15 \\
    $\RLTSH$ & 100 & 2,514.62 & 125.11 & 6.84 \\
    $\RLTSLH$ & 100 & \textbf{1,347.52} & 134.44 & 4.59 \\
    
    \midrule \midrule 
    
    \multicolumn{5}{c}{Sokoban} \\
    \midrule
    $\text{WA*}(1.5)$ & 1,000 & \textbf{1,201.03} & \textbf{32.91} & 0.51 \\
    LTS & 1,000 & 1,914.06 & 36.36 & 0.76 \\
    $\text{LTS}(\pi^\text{SG})$ & 1,000 & 2,010.88 & 35.82 & 1.98 \\
    $\text{PHS*}(\pi^\text{SG})$ & 1,000 & 1,630.60 & 34.66 & 1.56 \\
    \midrule
    $\RLTSL$ & 1,000 & 2,579.13 & 39.49 & 1.53 \\
    $\RLTSH$ & 1,000 & 2,626.82 & 34.76 & 1.60 \\
    $\RLTSLH$ & 1,000 & 1,736.03 & 36.37 & 1.10 \\
    
    \midrule \midrule 
    
    \multicolumn{5}{c}{TSP (GridWorld)} \\
    \midrule
    $\text{WA*}(1.5)$ & 100 & 180,381.42 & \textbf{35.40} & 62.75 \\
    LTS & 100 & 216.61 & 39.43 & 0.33 \\
    $\text{LTS}(\pi^\text{SG})$ & 100 & 76.99 & 36.40 & 0.69 \\
    $\text{PHS*}(\pi^\text{SG})$ & 100 & \textbf{46.31} & 36.39 & 0.49 \\
    \midrule
    $\RLTSL$ & 100 & 84.35 & 36.69 & 0.42 \\
    $\RLTSH$ & 100 & 55.44 & 35.92 & 0.37 \\
    $\RLTSLH$ & 100 & 56.12 & 36.16 & 0.38 \\
    \bottomrule
\end{tabular}
    \end{sc}
    }
    \end{small}
    \end{center}
\vskip -0.1in
\end{table}

\begin{table}[t]
    \caption{
        Training results averaged over increasing complexity BoulderDash environments. NPS stands for nodes per second, and time is measured in hours.
    }
    \label{table:bd_scale}
    \vspace{-3mm}
    \begin{center}
    \begin{small}
    \resizebox{1.0\columnwidth}{!}{%
    \begin{sc}
        
\begin{tabular}{lrrrr}
    \toprule
    Algorithm & Expansions & Time & Solved & NPS \\
    \midrule \midrule
    
    \multicolumn{5}{c}{BoulderDash (10\%)} \\
    \midrule
    $\text{PHS*}(\pi^\text{SG})$ & 30,686,786 & 10.63 & 10,000 &  802.1	\\
    $\RLTSL$ & 28,455,464 & 8.36 & 9,998 & 946.0 \\
    $\RLTSH$ & 17,692,073 & 5.00 & 9,999 & 981.9 \\
    $\RLTSLH$ & 18,654,789 & 5.36 & 10,000 & 967.4 \\

    \midrule \midrule 
    
    \multicolumn{5}{c}{BoulderDash (20\%)} \\
    \midrule
    $\text{PHS*}(\pi^\text{SG})$ & 299,672,516 & 137.12 & 10,000 & 607.1 \\
    $\RLTSL$ & 65,307,017 & 20.38 & 9,997 & 889.9 \\
    $\RLTSH$ & 19,866,165 & 5.99 & 9,997 & 921.4 \\
    $\RLTSLH$ & 25,971,412 & 7.70 & 9,992 & 936.6 \\
    
    \midrule \midrule 
    
    \multicolumn{5}{c}{BoulderDash (30\%)} \\
    \midrule
    $\text{PHS*}(\pi^\text{SG})$ & 429,407,720 & 278.23 & 11 & 428.7 \\
    $\RLTSL$ & 225,438,691 & 72.51 & 9,964 & 863.7 \\
    $\RLTSH$ & 27,965,657 & 8.37 & 9,996 & 928.4 \\
    $\RLTSLH$ & 57,780,467 & 16.50 & 10,000 & 972.5 \\
    
    \midrule \midrule 
    
    \multicolumn{5}{c}{BoulderDash (40\%)} \\
    \midrule
    $\text{PHS*}(\pi^\text{SG})$ & --- & --- & --- & --- \\
    $\RLTSL$ & 458,508,999 & 153.23 & 9,903 & 831.20 \\
    $\RLTSH$ & 38,524,592 & 11.94 & 9,994 & 896.58 \\
    $\RLTSLH$ & 75,672,897 & 22.49 & 9,995 & 934.79 \\

    \midrule \midrule 
    
    \multicolumn{5}{c}{BoulderDash (50\%)} \\
    \midrule
    $\text{PHS*}(\pi^\text{SG})$ & --- & --- & --- & --- \\
    $\RLTSL$ & 895,423,066 & 317.41 & 605 & 783.6 \\
    $\RLTSH$ & 72,358,349 & 22.86 & 9,995 & 879.1 \\
    $\RLTSLH$ & 97,117,682 & 30.13 & 9,990 & 895.2 \\
    \bottomrule
\end{tabular}
    \end{sc}
    }
    \end{small}
    \end{center}
\vskip -0.1in
\end{table}

\textbf{Search Efficiency}.
Across all domains (Figure~\ref{fig:train_all}), our hybrid rerooter substantially improves the bootstrap training efficiency over LTS and prior subgoal-generation approaches; comparisons among the \RLTS{} rerooter variants are discussed separately below.
The results highlight the value of combining the rerooters:
In BoulderDash, \RLTSH{} is effective and \RLTSLH{} matches its performance.
The remaining domains, Sokoban, CraftWorld, and TSP, all contain deadlock structure, where purely heuristic guidance can be less stable.
In these domains, \RLTSH{} remains competitive but exhibits greater variability, suggesting that the learned heuristic can sometimes overcommit search effort to misleading regions of the tree during bootstrap training.
This effect is most pronounced in CraftWorld and TSP; in these environments, WA* also requires substantially more time to complete training.
By contrast, \RLTSLH{} is both the strongest and most stable method in these domains, indicating that the hybrid rerooter benefits from the heuristic signal when it is useful while retaining the structural robustness of the clustering rerooter \RLTSL{} when the heuristic is unreliable.
\RLTSL{} substantially outperforms \SLTS{} despite both using the same clustering technique and neither using heuristics.
The cost-to-go rerooter \RLTSH{} outperforms WA* in every domain except Sokoban.
Runtime measurements further support Theorem~\ref{thm:rltsl}: while all methods share BFS-style asymptotic complexity, our rerooters incur the lowest overhead in practice. 
Together, these results isolate the effect of rerooting, showing that gains come from how structural information is exploited, not clustering alone, and that our rerooters scale reliably across diverse domains.
For the full table of results from Figure~\ref{fig:train_all}, see Table~\ref{table:training_all} in Appendix~\ref{appendix:full_training_results}.

\textbf{Test Results}.
To assess whether the training speedup degrades solution quality, we evaluate the trained models on held-out test problems (100 per domain; 1,000 for Sokoban; Table~\ref{table:test_all}).
Methods that failed to complete training perform substantially worse, since their policies and heuristics remain largely uninformed.
Despite faster training, our rerooting methods achieve comparable test performance to prior approaches, indicating that the efficiency gains do not come at the expense of learned policy quality.
Moreover, \RLTSLH{} consistently outperforms the clustering-only rerooter \RLTSL{} at test time, suggesting that combining global structure with local heuristic information improves generalization.
In TSP and Sokoban, our rerooting methods require more node expansions than $\SPHS$ in both and WA* in Sokoban.
We suspect this reflects weaker state-space clustering structure in these environments, and in Appendix~\ref{appendix:cluster_structure} we perform an analysis to verify this conjecture.

\textbf{Environment Domain Complexity Scaling}.
To test whether subgoal-based policy tree search methods like \SPHS{} scale poorly as problem complexity increases, we run a training-efficiency experiment on BoulderDash while varying the fraction of dirt tiles which are task irrelevant elements that substantially enlarge the state-space.

As complexity increases, \SPHS{} times out and makes no progress once levels contain 30\% or more dirt elements (Table~\ref{table:bd_scale}).
We attribute this to its reliance on reconstructing grounded future states as subgoals: 
additional dirt shifts the reconstruction objective toward visually dense but task-irrelevant structure, diverting capacity from critical elements such as keys or diamonds.
Although $\SLTS$ and $\SPHS$ are more robust than earlier subgoal-based methods like $\HIPSE$ to invalid subgoal predictions~\citep{tuero2025subgoalguided}, their low-level policies are still conditioned on generated subgoals, making learning increasingly brittle as state complexity grows.

In contrast, \RLTSL{} scales better by avoiding explicit subgoal reconstruction, though it eventually reaches a complexity threshold where training does not complete. 
\RLTSH{} and \RLTSLH{} remain robust across all tested levels, completing training even in the hardest settings with comparable performance. 
For an analysis for why \RLTSL{} does not scale as well as compared to \RLTSH{} and \RLTSLH{}, see Appendix~\ref{appendix:cluster_structure} for ablations on how the percentage of dirt elements affects the quality of the clusters found.
Overall, implicitly representing subtasks via rerooting scales substantially better than conditioning policies on explicitly generated subgoals.


\section{Related Work}

\textbf{Subgoal Search}.
Prior work has represented these through fixed-length subtasks \cite{czechowski2021subgoal}, multi-model approaches that represent different subtask lengths by training a separate fixed-horizon model for each length \cite{zawalski2022fast}, and performing a high-level search in subgoal-space using a subgoal generator model for varying-length subgoals \cite{kujanpaa2023hierarchical}. 
While these approaches show promising results, they all suffer from a lack of completeness, \ie, neither are guaranteed to find a solution even if one exists.
\HIPSE \cite{kujanpaa2024hybrid} added completeness guarantees by augmenting the search space to include both subgoals and atomic actions.
Subgoal Guided Policy Heuristic Search (\SGPS) \cite{tuero2025subgoalguided} generates subgoals like \HIPSE but performs search over atomic actions.
The generated subgoals are used to condition low-level policies, which are mixed with a high-level subgoal policy to produce a single policy usable by LTS or \PHS.
A novelty of SGPS is that it can be trained online using the data from the search tree, even when a budget for the search causes early termination with no solution found.
Both \HIPSE and SGPS are reliant on generated subgoals, and the performance of the search becomes tightly coupled to the quality of subgoal reconstruction and the policies conditioned on them.
As shown in our experiments, this becomes an issue when scaling to complex domains.

\textbf{State-Space Structure for Search Control}.
A complementary line of work leverages structural regularities in the state space, often via partitions or region abstractions, to guide how search allocates effort.
Cartesian Counterexample-Guided Abstraction Refinement \cite{clarke2000counterexample,seipp2013counterexample} guides search in cost-optimal planning through iteratively refining abstractions.
Recent work improves refinement strategies \cite{speck2022new} and ways of combining multiple abstractions for stronger heuristics \cite{salerno-et-al-ecai2025}.
In the options framework \cite{sutton1999between}, methods exploit connectivity and bottleneck structure to create temporally extended actions that reduce planning complexity.
This includes using entropy to discover skills that capture key transition patterns in the state-space \cite{zeng2023hierarchical,zeng2025hierarchical},
and state-space clustering \cite{agostinelli2019memory,ramesh2019successor}.
In contrast, we exploit structure only to steer the search effort through rerooting, without constructing abstractions for heuristic computation or learning temporally extended actions.


\section{Conclusions}
In this work, we examined rerooting as a general mechanism for exploiting structure in policy tree search and demonstrated its effectiveness across a range of instantiations.
We introduced two complementary rerooters, and showed how they can be combined to take advantage of the benefits each provides while offsetting each of their downsides.
Our rerooter achieves state-of-the-art performance in online training efficiency when using the bootstrap approach, foregoing the need to rely on costly subgoal generation models and state reconstruction.
Our results show that rerooting enables scalable and efficient search control by implicitly representing subtasks through weighted root selection rather than explicit subgoal reconstruction. 

Perhaps surprisingly, a simple heuristic-based rerooter often outperforms a clustering-based alternative despite relying on substantially less structural information, highlighting that lightweight local signals can be sufficient to guide effective rerooting. 
At the same time, the heuristic rerooter is not without limitations: under certain conditions, normalization effects can cause rerooting weights to collapse, leading to degraded performance. 
The hybrid rerooter addresses this tradeoff by combining global structural guidance with local heuristic information, achieving performance close to the heuristic-based approach while being robust across domains. 

Together, these findings position rerooting as a flexible and practical abstraction, and suggest that combining complementary structural signals offers a reliable path toward scalable policy tree search.
We have demonstrated that \RLTS{} with a simple transformation of the classical heuristic \emph{cost-to-go} can be very efficient.
We have also successfully combined two rerooters.
This opens the door to incorporating different types of side information to define more capable rerooters.


\section*{Acknowledgements}
This research
was supported by Canada’s NSERC and the CIFAR AI Chairs program. This research was
enabled in part by support provided by the Digital Research Alliance of Canada. The authors thank the anonymous reviewers for valuable feedback on this work.

\section*{Impact Statement}
This work advances the field of machine learning by developing practical rerooting mechanisms that improve how policy-guided tree search allocates effort in hard single-agent problems.
Our contributions are primarily technical and evaluated in game-like domains, but the core ideas may transfer to broader planning and decision-making settings over time.
We do not foresee immediate societal harms, but any real-world use should follow established ethical and safety practices.


\bibliography{paper}

@incollection{NEURIPS2019_9015,
  title     = {PyTorch: An imperative style, high-performance deep learning library},
  author    = {Paszke, Adam and Gross, Sam and Massa, Francisco and Lerer, Adam and Bradbury, James and Chanan, Gregory and Killeen, Trevor and Lin, Zeming and Gimelshein, Natalia and Antiga, Luca and Desmaison, Alban and Kopf, Andreas and Yang, Edward and DeVito, Zachary and Raison, Martin and Tejani, Alykhan and Chilamkurthy, Sasank and Steiner, Benoit and Fang, Lu and Bai, Junjie and Chintala, Soumith},
  booktitle = {Advances in Neural Information Processing Systems 32},
  year      = {2019},
  publisher = {Curran Associates, Inc.}
}

@inproceedings{
evans2023creating,
title={Creating Multi-Level Skill Hierarchies in Reinforcement Learning},
author={Joshua Benjamin Evans and {\"O}zg{\"u}r {\c{S}}im{\c{s}}ek},
booktitle={Thirty-seventh Conference on Neural Information Processing Systems},
year={2023},
}

@article{orseau2018single,
  title   = {Single-agent policy tree search with guarantees},
  author  = {Orseau, Laurent and Lelis, Levi and Lattimore, Tor and Weber, Th{\'e}ophane},
  journal = {Advances in Neural Information Processing Systems},
  year    = {2018}
}

@inproceedings{orseau2021policy,
  title     = {Policy-guided heuristic search with guarantees},
  author    = {Orseau, Laurent and Lelis, Levi HS},
  booktitle = {Proceedings of the AAAI Conference on Artificial Intelligence},
  year      = {2021}
}

@inproceedings{
  tuero2025subgoalguided,
  title={Subgoal-Guided Policy Heuristic Search with Learned Subgoals},
  author={Jake Tuero and Michael Buro and Levi Lelis},
  booktitle={Forty-second International Conference on Machine Learning},
  year={2025}
}

@article{orseau2024exponential,
  title={Exponential Speedups by Rerooting Levin Tree Search},
  author={Orseau, Laurent and Hutter, Marcus and Lelis, Levi HS},
  journal={arXiv preprint arXiv:2412.05196},
  year={2024}
}

@book{pearl1984heuristics,
  title     = {Heuristics: intelligent search strategies for computer problem solving},
  author    = {Pearl, Judea},
  year      = {1984},
  publisher = {Addison-Wesley Longman Publishing Co., Inc.}
}

@article{botvinick2009hierarchically,
  title={Hierarchically organized behavior and its neural foundations: A reinforcement learning perspective},
  author={Botvinick, Matthew M and Niv, Yael and Barto, Andew G},
  journal={Cognition},
  volume={113},
  number={3},
  pages={262--280},
  year={2009},
  publisher={Elsevier}
}

@article{correa2023humans,
  title={Humans decompose tasks by trading off utility and computational cost},
  author={Correa, Carlos G and Ho, Mark K and Callaway, Frederick and Daw, Nathaniel D and Griffiths, Thomas L},
  journal={PLoS computational biology},
  volume={19},
  number={6},
  year={2023},
  publisher={Public Library of Science San Francisco, CA USA}
}

@article{donnarumma2016problem,
  title={Problem solving as probabilistic inference with subgoaling: explaining human successes and pitfalls in the tower of hanoi},
  author={Donnarumma, Francesco and Maisto, Domenico and Pezzulo, Giovanni},
  journal={PLoS computational biology},
  volume={12},
  number={4},
  year={2016},
  publisher={Public Library of Science San Francisco, CA USA}
}

@article{czechowski2021subgoal,
  title   = {Subgoal search for complex reasoning tasks},
  author  = {Czechowski, Konrad and Odrzyg{\'o}{\'z}d{\'z}, Tomasz and Zbysi{\'n}ski, Marek and Zawalski, Micha{\l} and Olejnik, Krzysztof and Wu, Yuhuai and Kuci{\'n}ski, {\L}ukasz and Mi{\l}o{\'s}, Piotr},
  journal = {Advances in Neural Information Processing Systems},
  year    = {2021}
}

@article{zawalski2022fast,
  title   = {Fast and precise: Adjusting planning horizon with adaptive subgoal search},
  author  = {Zawalski, Micha{\l} and Tyrolski, Micha{\l} and Czechowski, Konrad and Odrzyg{\'o}{\'z}d{\'z}, Tomasz and Stachura, Damian and Pi{\k{e}}kos, Piotr and Wu, Yuhuai and Kuci{\'n}ski, {\L}ukasz and Mi{\l}o{\'s}, Piotr},
  journal = {arXiv preprint arXiv:2206.00702},
  year    = {2022}
}

@inproceedings{kujanpaa2023hierarchical,
  title        = {Hierarchical imitation learning with vector quantized models},
  author       = {Kujanp{\"a}{\"a}, Kalle and Pajarinen, Joni and Ilin, Alexander},
  booktitle    = {International Conference on Machine Learning},
  year         = {2023},
  organization = {PMLR}
}

@article{kujanpaa2024hybrid,
  title   = {Hybrid search for efficient planning with completeness guarantees},
  author  = {Kujanp{\"a}{\"a}, Kalle and Pajarinen, Joni and Ilin, Alexander},
  journal = {Advances in Neural Information Processing Systems},
  year    = {2024}
}

@article{blondel2008fast,
  title     = {Fast unfolding of communities in large networks},
  author    = {Blondel, Vincent D and Guillaume, Jean-Loup and Lambiotte, Renaud and Lefebvre, Etienne},
  journal   = {Journal of statistical mechanics: theory and experiment},
  volume    = {2008},
  number    = {10},
  year      = {2008},
  publisher = {IOP Publishing}
}

@article{karpas2018rational,
  title={Rational deployment of multiple heuristics in optimal state-space search},
  author={Karpas, Erez and Betzalel, Oded and Shimony, Solomon Eyal and Tolpin, David and Felner, Ariel},
  journal={Artificial Intelligence},
  volume={256},
  pages={181--210},
  year={2018},
  publisher={Elsevier}
}

@inproceedings{narayanan2017heuristic,
  title={Heuristic search on graphs with existence priors for expensive-to-evaluate edges},
  author={Narayanan, Venkatraman and Likhachev, Maxim},
  booktitle={Proceedings of the International Conference on Automated Planning and Scheduling},
  year={2017}
}

@article{traag2019louvain,
  title={From Louvain to Leiden: guaranteeing well-connected communities},
  author={Traag, Vincent A and Waltman, Ludo and Van Eck, Nees Jan},
  journal={Scientific reports},
  volume={9},
  number={1},
  pages={1--12},
  year={2019},
  publisher={Nature Publishing Group}
}

@inproceedings{sahu2024fast,
  title={Fast leiden algorithm for community detection in shared memory setting},
  author={Sahu, Subhajit and Kothapalli, Kishore and Banerjee, Dip Sankar},
  booktitle={Proceedings of the 53rd International Conference on Parallel Processing},
  year={2024}
}

@inproceedings{andreas2017modular,
  title        = {Modular multitask reinforcement learning with policy sketches},
  author       = {Andreas, Jacob and Klein, Dan and Levine, Sergey},
  booktitle    = {International conference on machine learning},
  year         = {2017},
  organization = {PMLR}
}

@misc{boxobanlevels,
  author       = {Arthur Guez and Mehdi Mirza and Karol Gregor and Rishabh Kabra and Sebastien Racaniere and Theophane Weber and David Raposo and Adam Santoro and Laurent Orseau and Tom Eccles and Greg Wayne and David Silver and Timothy Lillicrap and Victor Valdes},
  title        = {An investigation of Model-free planning: boxoban levels},
  year         = {2018}
}

@article{arfaee2011learning,
  title     = {Learning heuristic functions for large state spaces},
  author    = {Arfaee, Shahab Jabbari and Zilles, Sandra and Holte, Robert C},
  journal   = {Artificial Intelligence},
  volume    = {175},
  number    = {16-17},
  pages     = {2075--2098},
  year      = {2011},
  publisher = {Elsevier}
}

@article{pohl1970heuristic,
  title     = {Heuristic search viewed as path finding in a graph},
  author    = {Pohl, Ira},
  journal   = {Artificial intelligence},
  volume    = {1},
  number    = {3-4},
  pages     = {193--204},
  year      = {1970},
  publisher = {Elsevier}
}

@inproceedings{he2016deep,
  title={Deep residual learning for image recognition},
  author={He, Kaiming and Zhang, Xiangyu and Ren, Shaoqing and Sun, Jian},
  booktitle={Proceedings of the IEEE conference on computer vision and pattern recognition},
  year={2016}
}

@article{kingma2014adam,
  title={Adam: A method for stochastic optimization},
  author={Kingma, Diederik P},
  journal={arXiv preprint arXiv:1412.6980},
  year={2014}
}

@article{sutton1999between,
  title={Between MDPs and semi-MDPs: A framework for temporal abstraction in reinforcement learning},
  author={Sutton, Richard S and Precup, Doina and Singh, Satinder},
  journal={Artificial intelligence},
  volume={112},
  number={1-2},
  pages={181--211},
  year={1999},
  publisher={Elsevier}
}

@article{zeng2023hierarchical,
  title={Hierarchical state abstraction based on structural information principles},
  author={Zeng, Xianghua and Peng, Hao and Li, Angsheng and Liu, Chunyang and He, Lifang and Yu, Philip S},
  journal={arXiv preprint arXiv:2304.12000},
  year={2023}
}

@article{agostinelli2019memory,
  title={Memory-efficient episodic control reinforcement learning with dynamic online k-means},
  author={Agostinelli, Andrea and Arulkumaran, Kai and Sarrico, Marta and Richemond, Pierre and Bharath, Anil Anthony},
  journal={arXiv preprint arXiv:1911.09560},
  year={2019}
}

@article{ramesh2019successor,
  title={Successor options: An option discovery framework for reinforcement learning},
  author={Ramesh, Rahul and Tomar, Manan and Ravindran, Balaraman},
  journal={arXiv preprint arXiv:1905.05731},
  year={2019}
}

@article{zeng2025hierarchical,
  title={Hierarchical decision making based on structural information principles},
  author={Zeng, Xianghua and Peng, Hao and Su, Dingli and Li, Angsheng},
  journal={Journal of Machine Learning Research},
  volume={26},
  number={182},
  pages={1--55},
  year={2025}
}

@inproceedings{clarke2000counterexample,
  title={Counterexample-guided abstraction refinement},
  author={Clarke, Edmund and Grumberg, Orna and Jha, Somesh and Lu, Yuan and Veith, Helmut},
  booktitle={International Conference on Computer Aided Verification},
  year={2000},
  organization={Springer}
}

@inproceedings{seipp2013counterexample,
  title={Counterexample-guided Cartesian abstraction refinement},
  author={Seipp, Jendrik and Helmert, Malte},
  booktitle={Proceedings of the International Conference on Automated Planning and Scheduling},
  year={2013}
}

@inproceedings{speck2022new,
  title={New refinement strategies for Cartesian abstractions},
  author={Speck, David and Seipp, Jendrik},
  booktitle={Proceedings of the International Conference on Automated Planning and Scheduling},
  year={2022}
}

@inproceedings{salerno-et-al-ecai2025,
  author       = {Mauricio Salerno and Raquel Fuentetaja and David Speck and Jendrik Seipp},
  title        = {Merging {Cartesian} Abstractions for Classical Planning},
  editor       = {In{\^e}s Lynce and Aniello Murano},
  booktitle    = {Proceedings of the 28th {European} Conference on {Artificial} {Intelligence} ({ECAI} 2025)},
  publisher    = {IOS Press},
  year         = 2025,
}

@article{van2017neural,
  title={Neural discrete representation learning},
  author={Van Den Oord, Aaron and Vinyals, Oriol and others},
  journal={Advances in neural information processing systems},
  year={2017}
}

@article{culberson1997sokoban,
  title={Sokoban is {PSPACE}-complete},
  author={Culberson, Joseph},
  year={1997}
}
\bibliographystyle{icml2026}

\newpage
\appendix
\onecolumn

\clearpage
\section{Complete \RLTS Algorithm with Leiden and Heuristic Rerooters}
\label{app:alg_impl}
Below details how the \RLTSL, \RLTSH, and \RLTSLH{} rerooters are integrated into \RLTS search.
The policy $\pi_\theta$ and heuristic $h_\omega$ are parameterized by neural networks $\theta$ and $\omega$ respectively.

\begin{algorithm}[h!]
\caption{\RLTSL}
\label{alg:rltsl}
\begin{algorithmic}[1]
    \STATE {\bfseries Input:} Root node $n_1$, budget $b > 0$, policy $\pi_\theta$, graph update factor $\gamma > 1$, cluster level $k$
    \STATE {\bfseries Output:} Solution node, or status of failed search
    \STATE $Q_1 \gets \{n_1\}$
    \STATE $V_0 \gets \{n_1\}$, $E_0 \gets \{\}$
    \STATE $t \gets 1$, $r \gets 1$, $\tau \gets 1$
    \WHILE{$Q_t \ne \{\}$ \textbf{and} $t \le b$}
        \STATE $n_t = \argmin\limits_{n \in Q_t} \min\limits_{n_i \prec n} \dfrac{1}{w_i} \left(\sum\limits_{n_i \prec n' \preceq n } \dfrac{1}{\pi_\theta(n' | n_i)}\right)$
        \IF{$\texttt{is\_solution}(n_t)$}
            \STATE \textbf{return} $n_t$
        \ENDIF
        \STATE // Find node's color
        \IF{$\texttt{is\_root}(n_t)$}
            \STATE $w_t \gets 1$
        \ELSE
            \IF{$\texttt{has\_color}(n_t)$}
                \STATE $c_t \gets \texttt{color}(n_t)$
            \ELSE
                \STATE $c_t \gets \texttt{color}(\texttt{par}(n_t))$
            \ENDIF
            \STATE $\delta_{\tau, c_t} = |\{ \ell : \lfloor \gamma^\tau \rfloor < \ell \le t,\ c_\ell = c_t  \}|$
            \STATE $w_t \gets 1 / \left( M_{ \tau , c_t} + \delta_{\tau, c_t} \right)$
        \ENDIF
        \STATE // Incrementally update tree and induced graph
        \STATE $Q_{t+1} \gets Q_t \backslash \{n_t\} \cup \mathcal{C}(n_t)$
        \STATE $V_0 \gets V_0 \cup \mathcal{C}(n_t)$
        \STATE $E_0 \gets E_0 \cup \{ (n_t, n') : n' \in \mathcal{C}(n_t)\}$
        \STATE // Clustering is expensive, we amortize the cost over the search steps using an exponential schedule
        \IF{$t = r$}
            \STATE // Cluster Step 
            \STATE $\{G_1, \dots, G_N\} \gets \texttt{leiden}(G_0 = (V_0,E_0))$
            \STATE // Update node colors and color count map
            \STATE $(V_k, E_k) \gets G_k$
            \FOR{$i, U$ \textbf{in} \texttt{enumerate}$(V_k)$}
                \FOR{$v$ \textbf{in} $U$}
                    \STATE $\texttt{color}(v) \gets i$
                \ENDFOR
            \ENDFOR
            \STATE $M_{\tau+1, c} \gets | \{v \in V_0 : \texttt{color}(v) = c\} |$ for all $c \in \{1, \dots, |V_k|\}$
            \STATE $r \gets \ceil{r * \gamma}$
            \STATE $\tau \gets \tau + 1$
        \ENDIF
        \STATE $t \gets t + 1$
    \ENDWHILE
    \STATE \textbf{return} \texttt{False}
\end{algorithmic}
\end{algorithm}

\clearpage 

\begin{algorithm}[h!]
\caption{\RLTSH}
\label{alg:rltsh}
\begin{algorithmic}[1]
    \STATE {\bfseries Input:} Root node $n_1$, budget $b > 0$, policy $\pi_\theta$, heuristic $h_\omega$, heuristic temperature $\alpha$
    \STATE {\bfseries Output:} Solution node, or status of failed search
    \STATE $t \gets 1$
    \STATE $Q_t \gets \{n_1\}$
    \WHILE{$Q_t \ne \{\}$ \textbf{and} $t \le b$}
        \STATE $n_t = \argmin\limits_{n \in Q_t} \min\limits_{n_i \prec n} \dfrac{1}{w_i} \left(\sum\limits_{n_i \prec n' \preceq n } \dfrac{1}{\pi_\theta(n' | n_i)}\right)$
        \IF{$\texttt{is\_solution}(n_t)$}
            \STATE \textbf{return} $n_t$
        \ENDIF
        \IF{$\texttt{is\_root}(n_t)$}
            \STATE $w_t \gets 1$
        \ELSE
            \STATE $w_t \gets \exp \left(-\alpha \frac{h_\omega(n_t)}{h_\omega(n_1)} \right)$
        \ENDIF
        \STATE $Q_{t+1} \gets Q_t \backslash \{n_t\} \cup \mathcal{C}(n_t)$
        \STATE $t \gets t + 1$
    \ENDWHILE
    \STATE \textbf{return} \texttt{False}
\end{algorithmic}
\end{algorithm}

\begin{algorithm}[h!]
\caption{\RLTSLH}
\label{alg:rltslh}
\begin{algorithmic}[1]
    \STATE {\bfseries Input:} Root node $n_1$, budget $b > 0$, policy $\pi_\theta$, graph update factor $\gamma > 1$, cluster level $k$, heuristic temperature $\alpha$, mixing coefficients $u_a$ and $u_b$.
    \STATE {\bfseries Output:} Solution node, or status of failed search
    \STATE $Q_1 \gets \{n_1\}$
    \STATE $V_0 \gets \{n_1\}$, $E_0 \gets \{\}$
    \STATE $t \gets 1$, $r \gets 1$, $\tau \gets 1$
    \WHILE{$Q_t \ne \{\}$ \textbf{and} $t \le b$}
        \STATE $n_t = \argmin\limits_{n \in Q_t} \min\limits_{n_i \prec n} \dfrac{1}{w_i} \left(\sum\limits_{n_i \prec n' \preceq n } \dfrac{1}{\pi_\theta(n' | n_i)}\right)$
        \IF{$\texttt{is\_solution}(n_t)$}
            \STATE \textbf{return} $n_t$
        \ENDIF
        \STATE // Find node's color
        \IF{$\texttt{is\_root}(n_t)$}
            \STATE $w_t \gets 1$
        \ELSE
            \IF{$\texttt{has\_color}(n_t)$}
                \STATE $c_t \gets \texttt{color}(n_t)$
            \ELSE
                \STATE $c_t \gets \texttt{color}(\texttt{par}(n_t))$
            \ENDIF
            \STATE $\delta_{\tau, c_t} = |\{ \ell : \lfloor \gamma^\tau \rfloor < \ell \le t,\ c_\ell = c_t  \}|$
            \STATE $w_t \gets u_a\left( M_{ \tau , c_t} + \delta_{\tau, c_t} \right)^{-1} + u_b \exp \left(-\alpha \frac{h_\omega(n_t)}{h_\omega(n_1)} \right)$
        \ENDIF
        \STATE // Incrementally update tree and induced graph
        \STATE $Q_{t+1} \gets Q_t \backslash \{n_t\} \cup \mathcal{C}(n_t)$
        \STATE $V_0 \gets V_0 \cup \mathcal{C}(n_t)$
        \STATE $E_0 \gets E_0 \cup \{ (n_t, n') : n' \in \mathcal{C}(n_t)\}$
        \STATE // Clustering is expensive, we amortize the cost over the search steps using an exponential schedule
        \IF{$t = r$}
            \STATE // Cluster Step 
            \STATE $\{G_1, \dots, G_N\} \gets \texttt{leiden}(G_0 = (V_0,E_0))$
            \STATE // Update node colors and color count map
            \STATE $(V_k, E_k) \gets G_k$
            \FOR{$i, U$ \textbf{in} \texttt{enumerate}$(V_k)$}
                \FOR{$v$ \textbf{in} $U$}
                    \STATE $\texttt{color}(v) \gets i$
                \ENDFOR
            \ENDFOR
            \STATE $M_{\tau+1, c} \gets | \{v \in V_0 : \texttt{color}(v) = c\} |$ for all $c \in \{1, \dots, |V_k|\}$
            \STATE $r \gets \ceil{r * \gamma}$
            \STATE $\tau \gets \tau + 1$
        \ENDIF
        \STATE $t \gets t + 1$
    \ENDWHILE
    \STATE \textbf{return} \texttt{False}
\end{algorithmic}
\end{algorithm}

\clearpage 


\section{Proof of the Theorem~\ref{thm:rltsl}}
\label{app:rltsl_proof}

In every search iteration, the following operations occur: 
the min-cost node is removed from the priority queue,
the rerooting weight is computed,
and the children nodes are added back into the priority queue.

Extracting the min-cost node from a priority queue with $M$ items is $O(\log M)$.
Computing the rerooting weight for the expanded node is $O(d(n))$ with depth $d(n)$.
If $b$ is the branching factor, then inserting the $b$ children into the priority queue is $O(b \log M)$.
The cost of each node is computed during insertion to the priority queue.
Equation~\ref{eq:rlts_base_cost} requires a traversal of all ancestor nodes.
We note that \citet{orseau2024exponential} show how this can be reduced to an $O(1)$ computation for most cases, but for the sake of this analysis we assume all ancestor node information is required. 
Over $N$ steps, the runtime for these computations is given by
\begin{equation}
    O \left( \sum_{i=1}^N (1+b) \log M_i + d_i \right),
\end{equation}
where $M_i$ is the size of the priority queue at step $i$ and $d_i$ is the depth of node $n_i$.
Since $M_i \le bN$ and $d_i \le D$, this simplifies to $O(bN \log N + N\!D)$.

The frequency with which a Leiden clustering is performed follows a geometric series with common ratio $\gamma$.
With a budget $N$, there are $\floor{\log_\gamma(N)}$ calls to $\texttt{leiden}$.
The Leiden algorithm has a time complexity of $O(L|E|)$, where $L$ is the number of hierarchical cluster graphs created and $|E|$ is the number of edges in the input graph \cite{sahu2024fast}.
Since the cluster level $L=k$ is a constant, and the size of the edge set of the input graph is bounded by a constant factor of the size of the vertex set, each call to $\texttt{leiden}$ has complexity $O(b\gamma^k) = O(\gamma^k)$, where $\gamma^k$ is the iteration $\texttt{leiden}$ is called on a graph of size $b\gamma^k$. 
Over the $N$ steps, the runtime for the calls to $\texttt{leiden}$ is given by
\begin{equation}
    O \left(\sum_{i=0}^{\floor{\log_\gamma N}} b\gamma^i \right) = O\left(\frac{b \left(\gamma^{1+\log_\gamma N}-1\right)}{\gamma-1}\right) = O\left(\frac{bN\gamma}{\gamma-1} \right),
\end{equation}
Which is linear so long as $\gamma > 1 + 1 / \epsilon$ where $\epsilon > 0$ is a constant.
Therefore, the overall runtime of \RLTSL is $O(bN \log N + N\!D)$ as the additional cost of the Leiden algorithm has an amortized constant cost.


\section{Implementation and Machine Details}
\label{appendix:impl_details}
The algorithms and environment are implemented in {\CC},
adhering to the {\CC}23 standard.
The code\footnote{https://github.com/tuero/siirlts2026} is compiled using the \textit{GNU Compiler Collection} version 15.2.0,
and uses the PyTorch 2.7 {\CC} frontend \cite{NEURIPS2019_9015}.
The Leiden clustering subroutine uses \texttt{leidenalg}\footnote{https://github.com/vtraag/libleidenalg}, an efficient open source implementation.
Where available, the official implementation of comparison methods are used. 
All experiments used 8 threads for the bootstrap training and testing,
and were conducted on an Intel i9-7960X and Nvidia 3090, with 128GB of system memory running Ubuntu 24.04.


\section{Additional Experimental Details}
\label{appendix:experiment_details}
During the online bootstrap training process, all algorithms start with an initial budget of 4,000 node expansions.
Batched inference is used during training, where generated nodes are placed into a queue before sending to the policy/heuristic networks for inference. 
A batch size of 32 is used, with smaller batches being used if the open priority queue becomes empty before 32 additional nodes are generated.
In our experiments, all algorithms use ResNet-based \cite{he2016deep} networks.
Algorithms which use both a policy and heuristic use a single network with two heads.
The networks for $\text{WA*}$, \RLTSL, \RLTSH and \RLTSLH use 8 blocks of 128 ResNet channels,
are trained using the Adam optimizer \cite{kingma2014adam},
with learning rate of 3E-4 and L2-regularization of 1E-4.
The baselines $\SLTS$ and $\SPHS$ are trained following \citet{tuero2025subgoalguided} and use the open source implementation.\footnote{https://github.com/tuero/subgoal-guided-policy-search}

For \RLTSL and \RLTSLH, we use a graph update factor of $\gamma=1.2$ for the geometric schedule for when the Leiden clustering procedure is run. 
We use a cluster level $k=N$.
For \RLTSH{} and \RLTSLH{}, we use $\alpha=10$ for the inverse-temperature heuristic parameter. 
For an ablation on the impact of the value of $\alpha$ used, see Appendix~\ref{appendix:ablation_heuristic_temperature}.
All reported results use $u_a=u_b=1$ for \RLTSLH{}. 
For a sensitivity analysis on having balanced rerooters, see Appendix~\ref{appendix:ablation_balanced_rerooters}.


\section{Ablation: Impact of Clustering Level}
\label{appendix:ablation_cluster_level}
The Leiden clustering algorithm takes a base graph $G_0$ and produces a hierarchy of cluster graph $\{G_1, \dots, G_N\}$.
The selected graph at level $k \in [1,N]$ influences what structures are used when coloring nodes, which the rerooters \RLTSL and \RLTSLH use to compute their rerooting weights.

\begin{table}[h!]
    \caption{
        Training results of \RLTSL over the BoulderDash 20\% environments of varying cluster levels.
    }
    \label{table:ablation_cluster_level}
    \begin{center}
    \begin{small}
    \begin{sc}
        \begin{tabular}{crr}
        \toprule
        Cluster Level $(K)$ & Expansions & Time (Hours) \\
        \midrule \midrule

        $1$ & 166,690,697 & 42.86	\\
        $N/2$ & \textbf{65,307,017} & \textbf{20.38} \\
        $N$ & 77,504,769 & 21.44 \\
    
        \bottomrule
    \end{tabular}
    \end{sc}
    \end{small}
    \end{center}
\vskip -0.1in
\end{table}

Table~\ref{table:ablation_cluster_level} shows the bootstrap training loss for \RLTSL over the BoulderDash 20\% dirt environment of varying cluster levels. 
A cluster level of $k=N/2$ which corresponds to half the number of cluster graphs completes training in the fewest number of expansions and time, but using $k=N$ is also competitive.
We note that this will be dependent on the implementation of the Leiden algorithm being used as different approximations and optimizations can be used.
See Appendix~\ref{appendix:impl_details} for details and the specific implementation used.


\section{Ablation: Impact of Graph Update Frequency}
\label{appendix:ablation_graph_update_frequency}
The cluster graphs from the Leiden algorithm are built using an update schedule which follows a geometric growth strategy with factor $\gamma$. 
How frequently the graphs are updated will have two major consequences. 
One is the overall speed of the algorithm, where using a smaller $\gamma$ will result in more graphs being built under a fixed budget.
Creating cluster graphs more frequently means that more time is spent in the \texttt{leiden} procedure than actually running the search, resulting in a lower number of nodes-per-second which can be achieved.
The second is that using a larger $\gamma$ means that there will be more expansions between graph updates, with a larger percentage of those nodes being generated after the most recent cluster graph. 
If a node is expanded which does not have a color assigned to it (as is the case when an expanded node was generated after the most recent graph clustering), then an approximation is used where we assume that node gets the same color as its parent.
A larger $\gamma$ means that a larger percentage of the nodes will have an approximated color.

\begin{table}[h!]
    \caption{
        Test results of \RLTSL on the Sokoban environment of varying cluster graph update factors.
    }
    \label{table:ablation_cluster_factor}
    \begin{center}
    \begin{small}
    \begin{sc}
        \begin{tabular}{crrr}
        \toprule
        Cluster Update Frequency $(\gamma)$ & Expansions & Time (Seconds) & Nodes per Second \\
        \midrule \midrule

        $1.05$ & 2,932.19 & 2.01 & 7.57	\\
        $1.10$ & 2,664.08 & 1.36 & 8.00	\\
        $1.20$ & 2,715.87 & 1.42 & 8.47 \\
        $1.50$ & 2,835.79 & 1.49 & 8.71	\\
        $2.00$ & 3,187.88 & 1.71 & 8.67	\\
        $4.00$ & 3,357.53 & 2.12 & 9.24	\\
    
        \bottomrule
    \end{tabular}
    \end{sc}
    \end{small}
    \end{center}
\vskip -0.1in
\end{table}

Table ~\ref{table:ablation_cluster_factor} shows the test results of using the same trained policy on the Sokoban environment, but with varying values of $\gamma$.
In general, as $\gamma$ increases, the nodes per second that the algorithm can achieve increases.
We also see that as $\gamma$ decreases, the number of expansions decreases, as expected, because because the rerooter can make greater use of the underlying state-space structure to assign informative weights rather than relying on approximations.


\section{Ablation: Balanced Rerooters}
\label{appendix:ablation_balanced_rerooters}
The constant $C \ge 1$ in Theorem~\ref{thm:balanced_rerooters} signals that the bound is related to how \emph{balanced} the two sub-rerooter components are. 
Some natural questions to ask are how balanced are the two sub-rerooters in \RLTSLH{}, and how sensitive are the observed node expansions to the factor $C$ in Theorem~\ref{thm:balanced_rerooters}.

To answer these questions, we ran an additional sensitivity study in Sokoban, fixing $u_b=1$ and varying $u_a$, which are the \emph{weighting} applied to each sub-rerooter component.
These results are summarized in Table~\ref{table:ablation_factor_c}.
Performance varies only modestly across a broad range of ratios, suggesting that the hybrid rerooter is not highly sensitive to exact tuning.
The best results occur when the two signals are roughly balanced, \ie, when $w_{a,<T}/w_{b,<T} \approx 1$.
In practice, $u_a=u_b=1$ is a good default.
One can then inspect the observed ratio $w_{a,<T}/w_{b,<T}$ and rescale if needed, but fine-tuning is not critical.

\begin{table}[h!]
    \caption{
        Sensitivity analysis of unbalanced rerooters.
        \RLTSLH{} is tested on the Sokoban environment, with varying ratios of $u_a$ to $u_b$.
        Time is measured in seconds.
    }
    \label{table:ablation_factor_c}
    \begin{center}
    \begin{small}
    \begin{sc}
    \begin{tabular}{crrrr}
        \toprule
        $(u_a, u_b)$ & Expansions & Solution Cost & Time (s) & $w_{a,<T} / w_{b,<T}$ \\
        \midrule \midrule
    
        (1.00, 1.00) & 1,736.03 & 36.37 & 1.10 & 2.03 \\
        (0.75, 1.00) & 1,477.30 & 36.14 & 0.90 & 1.31 \\
        \textbf{(0.60, 1.00)} & \textbf{1,400.88} & 36.05 & 0.84 & \textbf{0.99} \\
        (0.50, 1.00) & 1,523.21 & 36.00 & 0.92 & 0.81 \\
        (0.25, 1.00) & 1,509.74 & 35.89 & 0.90 & 0.44 \\
    
        \bottomrule
    \end{tabular}
    \end{sc}
    \end{small}
    \end{center}
\vskip -0.1in
\end{table}


\section{Ablation: Heuristic Rerooter Temperature}
\label{appendix:ablation_heuristic_temperature}
The parameter $\alpha$ in \RLTSH{} and \RLTSLH{} acts as an inverse-temperature parameter.
When $\alpha$ is small, the weights are more similar across nodes, so the rerooter behaves conservatively. 
As $\alpha$ increases, the weight mass becomes more concentrated on nodes whose heuristic values are small relative to the root, causing rerooting to focus more aggressively on regions that appear to have made substantial progress toward a goal.

Table~\ref{table:ablation_alpha} shows the result of varying the $\alpha$ inverse-temperature parameter on the Sokoban environment for the \RLTSLH{} algorithm.
The same trained policy and heuristic networks were used for all test runs, with $(u_a,u_b)=(0.60,1.00)$ as this level of mixing resulted in balanced rerooters, as shown in Appendix~\ref{appendix:ablation_balanced_rerooters}.
The value of $\alpha=10$ solves the Sokoban test problems in the fewest number of expansions, which is the value of $\alpha$ used in all other experiments.
As $\alpha$ decreases towards 1, the number of expansions increases due to the rerooting weight contribution from the heuristic rerooter being similar across all nodes. 
Larger values of $\alpha$ allows the search to focus on areas of the tree where progress is being made towards the goal.
However, if $\alpha$ becomes too large, then the search can overcommit to what seems like progress being made but what could be a deadlocked subtree.

\begin{table}[h!]
    \caption{
        Sensitivity analysis of heuristic inverse-temperature parameter.
        \RLTSLH{} is tested on the Sokoban environment, with varying values of $\alpha$.
        Time is measured in seconds.
    }
    \label{table:ablation_alpha}
    \begin{center}
    \begin{sc}
    \begin{tabular}{lrrr}
        \toprule
        $\alpha$ & Expansions & Length & Time (s)  \\
        \midrule \midrule
    
        1 & 2,949.79 & 37.45 & 2.36 \\
        5 & 1,959.70 & 36.20 & 1.25 \\
        10 & \textbf{1,400.88} & 36.05 & 0.84 \\
        20 & 1,699.64 & 36.47 & 1.13 \\
    
        \bottomrule
    \end{tabular}
    \end{sc}
    \end{center}
\vskip -0.1in
\end{table}


\section{Ablation: Analysis of Clustering-based Rerooters}
\label{appendix:cluster_structure}

From the results in Table~\ref{table:test_all} and Table~\ref{table:bd_scale}, \RLTSL{} can perform quite a bit worse than \RLTSH{} and \RLTSLH{}.
The usefulness of the information provided by the clusters found which \RLTSL{} relies on depends on the underlying structure of the state-space.
The clustering rerooter is most effective when the induced state graph has well-separated regions connected by narrow bottlenecks, because entering a new cluster then signals genuine progress. 
When comparing against \SPHS{} in Table~\ref{table:test_all}, \RLTSL{} performs relatively worse in the Sokoban and TSP GridWorld environments.
Since \RLTSL{} reached the time limit during training on the CraftWorld environment and did not fully complete the bootstrap process, we omit from this analysis.
In Table~\ref{table:bd_scale}, as the complexity of the BoulderDash environments increases, \RLTSL{} continues to degrade relative to the other rerooters.

\begin{table}[h!]
    \caption[Average number of edges crossing different clusters]{
        Average number of edges crossing different clusters.
        A lower value indicates sharper separation between clusters.
    }
    \label{table:ablation_cluster_structure}
    \begin{center}
    \begin{sc}
    \begin{tabular}{lr}
        \toprule
        Environment & Avg Edge Crossing per Cluster \\
        \midrule \midrule
    
        BoulderDash (10\%) & 7.6219 \\
        BoulderDash (20\%) & 8.8399 \\
        BoulderDash (30\%) & 11.0756 \\
        BoulderDash (40\%) & 13.6613 \\
        BoulderDash (50\%) & 16.6101 \\
        CraftWorld & 9.7805 \\
        Sokoban & 16.6344 \\
        TSP (GridWorld) & 35.2884 \\
    
        \bottomrule
    \end{tabular}
    \end{sc}
    \end{center}
\vskip -0.1in
\end{table}

To quantify this, we measured the average number of edges crossing each cluster boundary in the induced state graph, averaged over problems. Lower values indicate sharper separation.
These results are given in Table~\ref{table:ablation_cluster_structure}.
Under this metric, BoulderDash exhibits a clear progression as the domain becomes less structured, CraftWorld remains relatively low, Sokoban and TSP are substantially higher.
In Sokoban, this helps explain why the clustering rerooter is less effective than the strongest baselines, and in TSP, the very large value is consistent with the strongest degradation, where the clustering rerooter requires nearly twice as many expansions as the heuristic/hybrid rerooters (despite both test-time results being overall low).
Overall, these results support our claim that the usefulness of clustering-based rerooting depends strongly on the presence of sharp, bottleneck-like state-space structure, and that this metric provides a quantitative way to characterize when that structure is present or absent.

\section{Additive Rerooters}
\label{appendix:rerooter_properties}

\newcommand{\lcomment}[1]{\textcolor{orange!80!black}{\emph{(LO: #1)}}}

\newcommand{\fmono}{\bar f}
\newcommand{\heur}{h_\omega}
\newcommand{\heurmono}{\bar h_\omega}
\newcommand{\Nsmall}{N_{\text{small}}}
\newcommand{\Nlarge}{N_{\text{large}}}
\newcommand{\parent}{\text{par}}


In this section we demonstrate how additive rerooters can take advantage not just of the best of the sub-rerooters,
but more importantly of the \emph{synergy} of the sub-rerooters.
Let us start with an example, before giving the proof of \cref{thm:balanced_rerooters}.

\begin{example}[Two rerooters]\label{ex:two_rerooters}
For example, suppose $d(n^*) = 30$ in a perfect binary tree with a uniform policy.
Then the LTS bound tells us that LTS would take at most $(d(n^*)+1)2^{30}$ node visits to find $n^*$,
but the refined algorithm and bound using $c^r_1$ actually takes at most $c^r_1(n^*) = 1+2+4+\dots 2^{30} = 2^{31}-1$ node visits \citep{orseau2024exponential}.
Let $n^a \prec n^*$ such that $d(n^a)= 10$.
Let $n^b \prec n^*$ such that $d(n^b)= 20$.

Let us consider three rerooters $w_a, w_b, w_{ab}$.
\RLTS with $w_a$ visits $n^a$ at step $T_{a,a}$ and $n^*$ at step $T^*_a$.
\RLTS with $w_b$ visits $n^b$ at step $T_{b,b}$ and $n^*$ at step $T^*_b$.
\RLTS with $w_{ab}$ visits $n^a$ at step $T_{ab,a}$ and $n^b$ at step $T_{ab, b}$ and $n^*$ at step $T^*_{ab}$.
We set $w_{a,1} = w_{b,1} = w_{a,T_{a,a}} = w_{b, T_{b,b}}$, and 0 everywhere else.
Furthermore, define $w_{ab} = w_a + w_b$.

\cref{eq:rlts_subtask_bound} tells us that for \RLTS with rerooter $w_a$ is competitive with the best subtask decomposition.
Therefore, it is also competitive with any subtask decomposition that is meaningful for analysis.
Let us compare it with the subtask decomposition $n_1 \prec n^a = n_{T_{a,a}}\prec n^*$
(which happens to be the best one, in this case),
the node $n^*$ is visited at $T^*_a$ with
\begin{align*}
    T^*_a-1 \leq w_{a, <T^*_a} \max\left\{ \frac{c^r_1(n^a)}{w_{a,1}},\ \frac{c^r_{T_{a,a}}(n^*)}{w_{a,a}} \right\}
     = 2 \max\left\{ \frac{2^{11}-2}{1},\ \frac{2^{21}-2}{1} \right\}
     = 2^{22} -4\,.
\end{align*}
This is already an improvement over the $2^{31}$ node visits of LTS.
Similarly, for \RLTS with rerooter $w_b$, considering the subtask decomposition $n_1 \prec n^b = n_{T_{b,b}} \prec n^*$,
the node $n^*$ is visited at step $T^*_b$ with
\begin{align*}
    T^*_b-1 \leq w_{b, <T^*_b} \max\left\{ \frac{c^r_1(n^b)}{w_{b,1}},\ \frac{c^r_{T_{b,b}}(n^*)}{w_{b,b}} \right\}
     = 2 \max\left\{ \frac{2^{21}-2}{1},\ \frac{2^{11}-2}{1} \right\}
     = 2^{22} -4\,.
\end{align*}
But for \RLTS with rerooter $w_{ab} = w_a + w_b$, we can consider the subtask decomposition $n_1 \prec n^a \prec n^b\prec n^*$ 
since both $n^a$ and $n^b$ now have non-zero rerooting weights:
\begin{align*}
    T^*_{ab}-1 &\leq w_{ab, <T_{ab}} \max\left\{ \frac{c^r_1(n^a)}{w_{ab,1}},\ \frac{c^r_{T_{ab,a}}(n^b)}{w_{ab,a}}, \frac{c^r_{T_{ab,b}}(n^*)}{w_{ab,b}} \right\}
    = 4\max\left\{\frac{2^{11}-2}{1}, \frac{2^{11}-2}{1}, \frac{2^{11}-2}{1}\right\}
    = 2^{13} - 8 \\
    &\approx 4\sqrt{T^*_a}\,.
\end{align*}
Hence, the hybrid rerooter $w_{ab}$ benefits from the synergy of $w_a$ and $w_b$ together, 
by decomposing into 3 subtasks rather than two, leading to a $512{\times}$ speedup.
\qed
\end{example}

We propose a more general result, as a variant of the subtask decomposition bound of \cref{eq:rlts_subtask_bound} that takes advantage of the synergy of \emph{balanced} additive rerooters.

\begin{theorem}[\cref{thm:balanced_rerooters}, Balanced additive rerooters]
\thmwbalanced{align}{}
\end{theorem}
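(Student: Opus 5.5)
The plan is to apply the subtask-decomposition bound \cref{eq:rlts_subtask_bound} directly to the combined rerooter $w = u_a w_a + u_b w_b$. Then, for each subtask, bound the time-share ratio $w_{<T}/w_{T_i}$ by $(C+1)$ times the better of the two individual ratios. No new analysis of \RLTS is needed. Everything reduces to an algebraic inequality on the weights, applied term by term inside the $\min_D \max_i$.

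First, since \RLTS run with $w$ is just \RLTS with some rerooter, \cref{eq:rlts_subtask_bound} gives
\[
T \leq 1 + \min_{D \in \mathcal{D}(n^*)} \max_{i<|D|} \frac{w_{<T}}{w_{T_i}}\, c^r_{T_i}(n_{T_{i+1}}).
\]
Here the steps $T_i$ are the actual expansion steps in the run with $w$. By linearity of the cumulative sum, $w_{<T} = u_a w_{a,<T} + u_b w_{b,<T}$ and $w_{T_i} = u_a w_{a,T_i} + u_b w_{b,T_i}$.

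Second, fix $i$ and use the balance assumption $u_b w_{b,<T} \leq C\, u_a w_{a,<T}$. This gives
\[
w_{<T} \leq (1+C)\, u_a w_{a,<T}.
\]
Since $w_{T_i} \geq u_a w_{a,T_i}$, it follows that
\[
\frac{w_{<T}}{w_{T_i}} \leq (C+1)\frac{w_{a,<T}}{w_{a,T_i}}.
\]
The factor $u_a$ cancels; if $u_a = 0$, the balance hypothesis already forces degenerate cases that can be handled separately. Symmetrically, $u_a w_{a,<T} \leq C\, u_b w_{b,<T}$ gives
\[
\frac{w_{<T}}{w_{T_i}} \leq (C+1)\frac{w_{b,<T}}{w_{b,T_i}}.
\]
Taking the minimum of the two bounds, multiplying by $c^r_{T_i}(n_{T_{i+1}}) \geq 0$, and then applying $\max_i$ and $\min_D$ yields the stated inequality, because both operations are monotone.

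The main obstacle is bookkeeping rather than mathematics, and it has two parts. The first is zero or undefined ratios. When $w_{a,T_i} = 0$, the ratio should be read as $+\infty$, and the minimum then picks the other component. The convention $x/0 = \infty$ makes this harmless, as long as $w_{T_i} > 0$ for a useful decomposition. The second is being clear that $w_{a,<T}$ and $w_{a,T_i}$ denote the sub-rerooter values on the nodes expanded by the combined search, not by a separate run with $w_a$ alone. That is exactly what makes the synergy in \cref{ex:two_rerooters} possible: different subtasks $i$ may select different components in the minimum.
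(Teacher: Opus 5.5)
Your proposal is correct and follows the paper's proof: the balance condition gives $w_{<T} \le (C+1)u_a w_{a,<T}$ and $w_{<T} \le (C+1)u_b w_{b,<T}$, and combining these with $w_{T_i} \ge u_a w_{a,T_i}$ and $w_{T_i} \ge u_b w_{b,T_i}$ bounds the time-share ratio by $(C+1)$ times the minimum of the individual ratios, which is then plugged into \cref{eq:rlts_subtask_bound}. Your handling of zero weights via the convention $x/0=\infty$ stands in for the paper's simpler assumption that all these quantities are strictly positive.
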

\begin{proof}
Using the assumption on the rerooters
and assuming $C$ exists (i.e., all quantities are strictly positive),
$u_a w_{a,<T} \leq C u_b w_{b,<T}$ and 
$u_b w_{b,<T} \leq C u_a w_{a,<T}$, so
$w_{<T} = u_a w_{a,<T} + u_b w_{b,<T} \leq (C+1) u_a w_{a,<T}$ and also 
$w_{<T} \leq (C+1) u_b w_{b,<T}$.
Therefore,
\begin{align*}
    \frac{w_{<T}}{w_t} 
    &= \frac{w_{<T}}{u_aw_{a,t} + u_bw_{b,t}} \\
    &\leq \min\left\{
    \frac{w_{<T}}{u_aw_{a,t}},\, 
    \frac{w_{<T}}{u_bw_{b,t}}
    \right\}
    \leq (C+1) \min\left\{
    \frac{u_a w_{a,<T}}{u_aw_{a,t}},\, 
    \frac{u_b w_{b,<T}}{u_bw_{b,t}}
    \right\} 
    = (C+1)\min\left\{
    \frac{w_{a,<T}}{w_{a,t}},\, 
    \frac{w_{b,<T}}{w_{b,t}}
    \right\}\,.
\end{align*}
Plugging in to \cref{eq:rlts_subtask_bound} gives the result.
\end{proof}

As mentioned in the main text, the constant $C\geq 1$ can be controlled by adjusting the weights $u_a$ and $u_b$,
such that $u_a w_{a,<T}$ and $u_b w_{b,<T}$ are close to one another.


\clearpage
\section{Complete Training Results}
\label{appendix:full_training_results}

\begin{table}[h]
    \caption{
        Average online training loss with respect to expansions and time.
        Time measures the sum an algorithm spends on each problem across all threads used during training, in hours.
    }
    \label{table:training_all}
    \begin{center}
    \begin{sc}
        \begin{tabular}{lrrr}
    \toprule
    Algorithm & Expansions & Time & Solved \\
    \midrule \midrule
    
    \multicolumn{4}{c}{BoulderDash} \\
    \midrule
    $\text{WA*}(1.5)$ & 652,027,388.80 & 122.87 & 9,966.60 \\
    LTS & 1,462,860,716.80 & 278.12 & 106.00 \\
    $\text{LTS}(\pi^\text{SG})$ & 259,846,338.00 & 104.19 & 9,999.60 \\
    $\text{PHS*}(\pi^\text{SG})$ & 94,266,331.60 & 35.12 & 10,000.00 \\
    \midrule
    $\RLTSL$ & 128,970,198.60 & 32.57 & 9,996.00 \\
    $\RLTSH$ & \textbf{24,154,363.00} &\textbf{7.46} & 9,999.40 \\
    $\RLTSLH$ & 28,257,820.80 & 8.47 & 9,995.80 \\

    \midrule \midrule 
    
    \multicolumn{4}{c}{CraftWorld} \\
    \midrule
    $\text{WA*}(1.5)$ & 456,670,096.60 & 155.71 & 9,847.40 \\
    LTS & 804,101,431.20 & 277.91 & 315.00 \\
    $\text{LTS}(\pi^\text{SG})$ & 422,689,637.00 & 278.07 & 434.20 \\
    $\text{PHS*}(\pi^\text{SG})$ & 252,454,918.60 & 153.53 & 9,998.20 \\
    \midrule
    $\RLTSL$ & 479,154,851.00 & 257.24 & 9,507.40 \\
    $\RLTSH$ & 164,845,073.20 & 60.20 & 9,939.40 \\
    $\RLTSLH$ & \textbf{155,766,283.20} & \textbf{48.51} & 9,925.00 \\
    
    \midrule \midrule 
    
    \multicolumn{4}{c}{Sokoban} \\
    \midrule
    $\text{WA*}(1.5)$ & 131,496,170.40 & 18.77 & 47,411.80 \\
    LTS & 172,870,658.20 & 29.86 & 47,378.00 \\
    $\text{LTS}(\pi^\text{SG})$ & 233,050,490.80 & 69.79 & 47,226.20 \\
    $\text{PHS*}(\pi^\text{SG})$ & 127,096,718.60 & 38.15 & 47,452.00 \\
    \midrule
    $\RLTSL$ & 170,776,344.60 & 34.91 & 47,564.20 \\
    $\RLTSH$ & 100,488,660.60 & 17.89 & 47,798.60 \\
    $\RLTSLH$ & \textbf{95,196,226.80} & \textbf{17.88} & 47,756.60 \\
    
    \midrule \midrule 
    
    \multicolumn{4}{c}{TSP (GridWorld)} \\
    \midrule
    $\text{WA*}(1.5)$ & 2,059,512,675.20 & 258.51 & 4,000.60 \\
    LTS & 1,654,329,587.60 & 226.23 & 9,999.60 \\
    $\text{LTS}(\pi^\text{SG})$ & 122,659,044.00 & 41.26 & 10,000.00 \\
    $\text{PHS*}(\pi^\text{SG})$ & 22,764,606.80 & 7.21 & 10,000.00 \\
    \midrule
    $\RLTSL$ & 67,925,838.20 & 13.91 & 10,000.00 \\
    $\RLTSH$ & 693,847,701.00 & 53.30 & 10,000.00 \\
    $\RLTSLH$ & \textbf{13,481,743.40} & \textbf{3.33} & 10,000.00 \\
    \bottomrule
\end{tabular}
    \end{sc}
    \end{center}
\vskip -0.1in
\end{table}


\end{document}